\newtheorem{theorem}{Theorem}
\newtheorem{lemma}{Lemma}
\newtheorem{corollary}{Corollary}
\newtheorem{example}{Example}
\newenvironment{proof}{\paragraph{Proof.}}{\hfill$\square$}
\title{Spiking Neural Networks in the Alexiewicz Topology: 
A New Perspective on Analysis and Error Bounds}
\author{ 
{
\hspace{1mm}Bernhard A.~Moser}\thanks{double affiliation: Software Competence Center Hagenberg (SCCH), 4232 Hagenberg, Austria} \\
	Institute of Signal Processing \\
	Johannes Kepler University of Linz\\
	\texttt{bernhard.moser@\{scch.at,jku.at\}} 
	\And
	{\hspace{1mm}Michael Lunglmayr} \\
	Institute of Signal Processing\\
	Johannes Kepler University of Linz, Austria\\
	\texttt{michael.lunglmayr@jku.at} 
	}
\begin{document}
\maketitle

\begin{abstract}
In order to ease the analysis of error propagation in neuromorphic computing and to get a better understanding of spiking neural networks 
(SNN), we address the problem of mathematical analysis of SNNs as endomorphisms that map spike trains to spike trains. 
A central question is the adequate structure for a space of spike trains and its implication for the design of error measurements of SNNs including time delay, threshold deviations, and the design of the reinitialization mode of the leaky-integrate-and-fire (LIF) neuron model.
First we identify the underlying topology by analyzing the closure of all sub-threshold signals of a LIF model. 
For zero leakage this approach yields the Alexiewicz topology, which we adopt to LIF neurons with arbitrary positive leakage.
As a result LIF can be understood as spike train quantization in the corresponding norm. 
This way we obtain various error bounds and inequalities such as a quasi isometry relation between incoming and outgoing spike trains. 
Another result is a Lipschitz-style global upper  bound for the error propagation and a related resonance-type phenomenon.
\end{abstract}

\keywords{Leaky-Integrate-and-Fire (LIF) Neuron \and Spiking Neural Networks (SNN) 
\and Re-Initialization  \and Quantization \and Error Propagation \and Alexiewicz Norm}

\section{Introduction}
\label{sec:Introduction}
Spiking neural networks (SNNs) are artificial neural networks 
of interconnected neurons that asynchronously  process and transmit spatial-temporal information based on the occurrence of spikes that come  from spatially distributed sensory input neurons. The most commonly used neuron model in SNNs is the leaky-integrate-and-fire (LIF) model~\cite{bookGerstner2014}. 
Despite its strong simplification of the biological way of spike generation, the LIF  model has proven useful in particular when 
modeling the temporal spiking characteristics in the biological nervous system.  
For an overview see, e.g.,~\cite{TAVANAEI2019, Nunes2022}.
At the interfaces, that is from real-world to SNN, and from SNN output to real-world, in general there is the need to translate analogue perceived intensities into spike trains, and later on, after processing, to decode the resulting spike trains into meaningful decisions.
Some approaches prefer a rate-based encoding while others emphasize on  the timing, e.g., of first arriving spikes~\cite{Guo2021}.
In the context of sampling time-varying signals also alternatives of LIF are encountered to sample 
real analogue signals into spikes, e.g., by delta coding or synonymously used terms like send-on-delta, level crossing or threshold-based sampling~\cite{miskowicz2006sendondelta,Tobi2014,Yousefzadeh2022}.

Due to their particular nature of asynchronous and sparse information processing, SNNs are studied mainly for two reasons: first, as a simplified mathematical model in the context of computational neuroscience aiming at a better understanding of biological neural circuits, and second, as a further  step  towards more powerful but energy-efficient embedded AI edge solutions to process time-varying signals 
 with a wide range of applications including visual processing~\cite{Amir2017,Yousefzadeh2022}, audio recognition~\cite{SNNSound2018}, speech recognition~\cite{SpeechRec2020}, biomedical signal processing~\cite{Hassan2018RealTimeCA} and robotic control~\cite{Kabilan2021,Yamazaki2022}. New application scenarios are emerging in the context of edge AI and federated learning across a physically distributed network of resource-constrained edge devices to collaboratively train a global model while preserving privacy~\cite{YangFedSNN2022}. 
Of particular interest are applications in the emerging field of brain-computer interfaces which  opens up new perspectives for the treatment of neurological diseases such as Parkinson's disease~\cite{Dethier_2013, Gege2021}.
For an overview on the performance comparison between SNNs and conventional vector-based artificial neural networks see~\cite{DENG2020294}.
However, the full potential of SNNs, in particular their energy efficiency and dynamic properties, will only become manifest when implemented on dedicated neuromorphic hardware, leading to ongoing research in this direction~\cite{Bouvier2019SpikingNN,TrueNorth2019,Ostrau2022,Michaelis2022}.

However, despite the great potential of SNNs and the ongoing research efforts the practical realizations are few so far. 
To this end, the research on SNNs is still in an early phase of maturity, particularly
lacking mathematical foundation which becomes necessary due to the special hybrid continuous-discrete nature of SNNs and the underlying 
paradigm shift towards event-based signal processing. 

A closer look at the different ways of sampling makes apparent this paradigm shift as sketched in Fig.~\ref{fig:ParadigmShift}. In equidistant-based uniform sampling, the mathematics of information encoding and processing is based on regular Dirac combs and its embedding in Hilbert spaces with its powerful mathematical tools of convolution, orthogonal projection, and based thereupon spectral analysis, signal filtering and reconstruction. 
As sequences of uniformly distributed Dirac pulses in time, regular Dirac combs and related concepts of signal decomposition into regular wave forms can be viewed as a trick that allows time to be treated mathematically as a space variable. 
However, this mathematical abstraction neglects the time information that is implicitly encoded by events~\cite{miskowicz2006sendondelta}.
To this end, the resulting mathematics of uniform sampling and signal processing becomes basically vector-based.  
In contrast, in biological information processing systems and bio-inspired neuromorphic computing, see e.g.,~\cite{TAVANAEI2019},
~\cite{Nunes2022}, the paradigm of information encoding somewhat flips the role of regularity w.r.t time versus amplitude.
While in uniform sampling and related signal processing time is treated as a regular structure and the amplitudes of sampled values
are kept flexible, in bio-inspired sampling and signal processing the amplitudes are forced into a regular structure by means of thresholding while keeping the handling of time flexible. 
\begin{wrapfigure}{r}{10cm}
  \includegraphics[width=9.5cm]{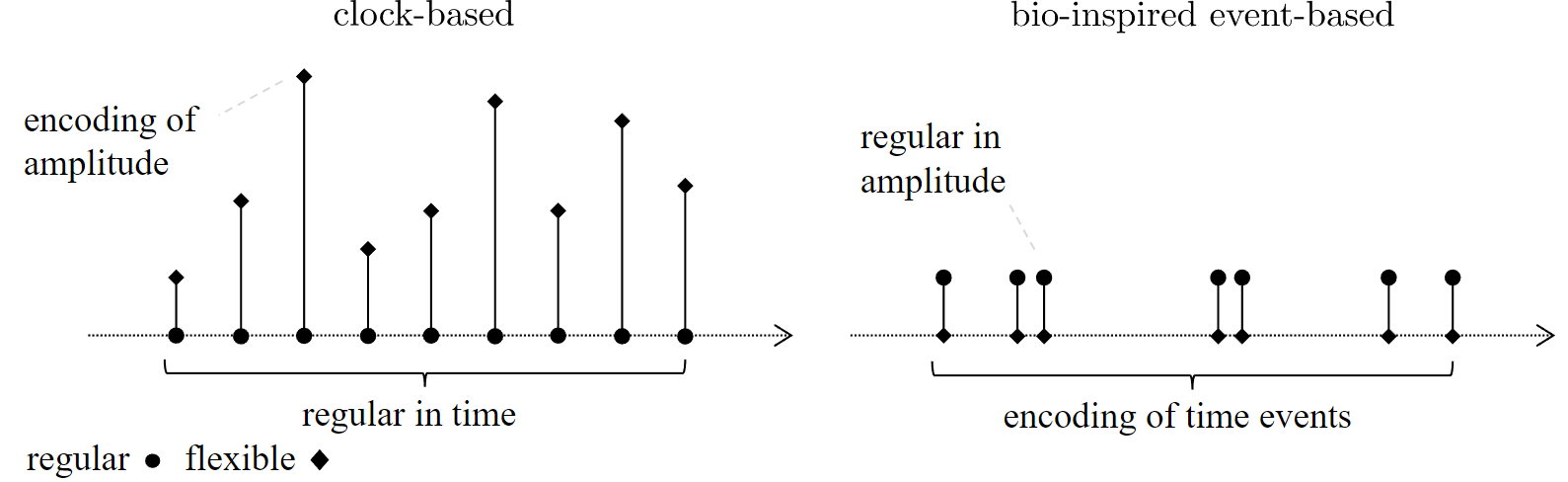}
  \caption{Paradigm shift in information encoding of uniform (left) versus threshold-based (right) sampling.}
		\label{fig:ParadigmShift}
\end{wrapfigure}
This paradigm shift of information encoding will have implications on the mathematical foundation of handling 
irregularity in time, that is to handle sequences of Dirac impulses beyond a regular comb structure, see Figure~\ref{fig:ParadigmShift}.

While regular time leads to Hilbert spaces, and inherently to the Euclidean  norm, 
our approach is to revise the Euclidean view of geometry in this context
in favor of an approach based on alternative metrics which become necessary 
if we postulate certain analytical properties on the topological and metric structure of the space of
spike trains in combination with neuronal models and spiking neural networks as used in neuromorphic computing.
Our paper is a contribution to the mathematical foundation of SNNs 
by elaborating on postulates on the topology of the vector space of spike trains in terms of sequences of weighted Dirac impulses. 
Our approach is a follow-up of~\cite{MoserNatschlaeger2014Stability,Moser15a,Moser16EBCCSP,Moser2017Similarity,MoserLunglmayr2019QuasiIsometry} which discusses the discrepancy measure as a special range-based metric for spike trains for which a quasi-isometry relation for threshold-based sampling between analog  signals as input and  spike trains as output of a leaky-integrate-and-fire neuron can be established.
In this paper, we go beyond sampling and show that range-based metrics also play a special role  
for the mathematical conception of error, resp., deviation analysis based on spike trains, and thus for understanding, analyzing, and 
bounding error propagation of spiking neural networks (SNNs). 
 
The paper is structured as follows. In Section~\ref{s:Preliminaries}, 
we fix notation and prepare preliminaries from SNNs based
on leaky integrate-and-fire neuron model and the mathematical idealized assumption of instantaneous realization of events due to 
an impulse input. In this context, we discuss proposed variants of re-initialization after the firing event 
and revise the re-initialization mode known as {\it reset-by-subtraction} in the general setting of spike amplitudes 
that exceed multiples of the threshold. Such high spike amplitudes can arise in SNNs by scaling the input channels by weights.
This way we introduce the {\it reset-to-mod} re-initialization which results 
actually from a consequent application of the assumption of instantaneous events to {\it reset-by-subtraction}.
The resulting operation can be considered as modulo division. 
{\it reset-to-mod} allows to take a broader view on the mathematics of LIF as endomorphism that operates on the vector space of
spike trains of weighted Dirac impulses. 
This way we study the resulting LIF operator, specify the above motivated postulates in Section~\ref{s:Postulates} and derive a solution based on a topological argument in Subsection~\ref{ss:AlexiewiczTopology}, leading to the  Alexiewicz norm for integrate-and-fire (IF). 
We generalize this norm for leaky integrate-and-fire (LIF) and, as a main result, we show in Subsection~\ref{ss:Quantization} that the LIF operator
acts as spike quantization in the grid given by the corresponding norm. 
The related inequality turns out to be useful when it comes to the analysis of error and signal propagation due to perturbations or
added spikes in the input channels. Section~\ref{s:Additive} focuses on the 
effect of added spikes in the input channel, which results in a Lipschitz-style upper bound for the LIF model and feed-forward SNNs in general.
This analysis also shows that there can be a principle change in the input-output characteristic
along the transition from  integrate-and-fire with zero leakage to arbitrarily small leaky parameters. 
Section~\ref{s:Evaluation} reports on simulations that illustrate our theoretical approach.

\section{Preliminaries}
\label{s:Preliminaries}
First we recall the LIF neural model with its computational variants and fix notation. 
It has a long history which goes back to~\cite{lapicque_recherches_1907}. For an overview on its motivation and relevance in neuroscience and neuromorphic computing see~\cite{bookGerstner2014}, ~\cite{Daya_2001_book} and~\cite{snnTorch2021}.
Situated in the middle ground between biological plausibility and technical feasibility, the LIF model 
abstracts away the shape and profile of the output spike. This way spikes are mathematically represented as Dirac delta impulses.

Therefore, we start with spike trains as input signals which we assume to be given mathematically as sequences of weighted Dirac impulses, i.e.,
\begin{equation}
\label{eq:spiketrain}
\eta(t) := ([a_i; t_i]_i)(t):= \sum_{i \in \mathbb{N}_0} a_i \, \delta_{t_i}(t),
\end{equation}
where $a_i \in \mathbb{R}$ and $\delta_{t_i}$ refers to a Dirac impulse shifted by $t_i$.
$\mathbb{N}_0$ means that there is no bound for the number of spikes, though for convenience we assume that for each spike train the number is finite.
For convenience and without loss of generality, to ease notation below we assume that  $t_0=0$ and $a_0 = 0$. The empty spike train is denoted by $\emptyset$.

$(\mathbb{S}, +, \cdot)$ denotes the vector space of all spike trains~(\ref{eq:spiketrain}) based on usual addition and scaling, 
which later on will be equipped with a metric $d(.,.)$, resp. norm $\|.\|$,  to obtain the metric space $(\mathbb{S}, d)$, respectively, normed space 
$(\mathbb{S}, d)$. 

Mathematically, the LIF neuron model is actually an endomorphism, $\mbox{LIF}_{\vartheta, \alpha}: \mathbb{S} \rightarrow \mathbb{S}$, that is determined by two parameters, the threshold $\vartheta>0$ and the leaky time constant $\alpha>0$ and the mode for resetting the neuron after firing, respectively, the charging/discharging event. 
In this paper we consider three reset modes, {\it reset-to-zero}, {\it reset-by-subtraction} and {\it reset-to-mod}.
According to~\cite{snnTorch2021}, {\it reset-to-zero} means that the potential is reinitialized to zero after firing, while 
{\it reset-by-subtraction} subtracts the $\vartheta$-potential $u_{\vartheta}$ from the membrane's potential that triggers the firing event.
As a third variant we use the term {\it reset-to-mod}, which can be understood as 
instantaneously cascaded application of {\it reset-by-subtraction} according to the factor by which the membrane's potential exceeds the threshold which results in a modulo computation. This means, in the {\it reset-to-mod} case the 
re-initialization starts with the residue after subtracting the integral multiple of the threshold from the membrane's potential at firing time.
 
Setting $t_0^{(1)}:=0$ (where the upper index indicates the layer, here the output of LIF) and $\eta_{\tiny in}(t) := ([a_i; t_i^{(0)}]_{i})(t)$
the mapping 
$$
\sum_{i \in \mathbb{N}_0} b_i \, \delta_{t_i^{(1)}} = \eta_{\tiny out} = \mbox{LIF}_{\vartheta, \alpha}(\eta_{\tiny in})
$$
is recursively given by 
\begin{equation}
\label{eq:LIF}
t_{i+1}^{(1)} =
\inf\left\{t\geq t_i^{(1)}
: \,
\left| u_{\vartheta, \alpha}(t_i^{(1)}, t)  \right| 
\geq \vartheta\right\}, 
\end{equation}
where 
\begin{equation}
\label{eq:u}
u_{\alpha}(t_i, t) := 
 \int^t_{t_i}
	e^{-\alpha (\tau - t_i)} 
	\left(		\eta_{\tiny in}(\tau) - \mbox{discharge}(t_i, \tau) \right) d\tau
\end{equation}
models the dynamic change of the neuron membrane's potential after an input spike event at time $t_i$  
(based on the assumption of instantaneous increase, resp. decrease).
At the moment when the absolute value of the membrane potential touches the threshold-level, $\vartheta>0$, an output spike is generated 
whose amplitude is given by 
$b_{i+1} = +\vartheta$ or $= - \vartheta$ depending on whether the membrane's potential $u_{\alpha}$ 
in~(\ref{eq:LIF}) is positive or negative.

The process of triggering an output spike is actually a charge-discharge event that is followed by the
re-initialization of the membrane's potential modeled by an instantaneously acting discharge process
\begin{equation}
\label{eq:reset}
\mbox{discharge}(t_i^{(1)}, \tau) := 
\left\{
\begin{array}[2]{lcl}
	u_i\,\delta_{t_i^{(1)}}(\tau) & \ldots & \mbox{for {\it reset-to-zero}}, \\
	\mbox{sgn}(u_i)\, \vartheta\, \delta_{t_i^{(1)}}(\tau)  & \ldots & \mbox{for {\it reset-by-subtraction}},  \\
        \mbox{sgn}(u_i) [u_i/\vartheta] \, \vartheta \, \delta_{t_i^{(1)}}(\tau)  & \ldots & \mbox{for {\it reset-to-mod}},	
\end{array}
\right.
\end{equation}
where $u_i:= u_{\alpha}(t_{i-1}, t_i)$, $\mbox{sgn}(x) \in \{-1,0,1\}$ is the signum function and 
\begin{equation}
\label{eq:[]}
[x] := \mbox{sgn}(x)\max\{k \in \mathbb{Z}: k \leq |x|\} 
\end{equation}
realizes integer quantization by truncation.

The integration in (\ref{eq:LIF}) models the voltage in an RC circuit as response to current impulses.
Note that the immediate reset without delay in (\ref{eq:LIF}) is an idealization from biology or hardware realizations.
Anyway, since in practical realizations spikes are sparse in time (or should be) this idealization is a justifiable approximation.
{\it reset-to-zero} and {\it reset-by-subtraction} can show quite different behavior if the spike amplitudes are large.

The {\it reset-by-subtraction} mode can be understood as compensation event so that the net voltage balance of the 
spiking event equals zero, i.e. in case of an output spike with amplitude $\vartheta$ the membrane is actually
discharged by this amount. Accordingly, though not always made clear in the literature, see for example~\cite{snnTorch2021},
this assumption has the subtle consequence that an increase of the membrane potential $u$ by multiples $[u/\vartheta]$ of the threshold level 
$\vartheta$ results in a discharge of the membrane's potential by the same amount, 
i.e., $[u/\vartheta]\vartheta$.

This can also be seen virtually as sequence of $[u/\vartheta]$-many $\vartheta$-discharge actions, which  acting in sequence
in instantaneous time produce the same result, that is an output spike with amplitude 
$[u/\vartheta]\vartheta$.
Here we express the amplitude of the output spikes as multiple of the unit in terms of the threshold potential $\vartheta$.
For example, consider a single spike $\eta_{\tiny in}(t):= a_1 \delta_{t_1}(t)$ with large amplitude $|a_1|> \vartheta$. 
Note that due to the idealization of instantaneous actions of charge and discharge events 
the discharge model in the {\it reset-by-subtraction} mode implies that  
$\eta_{\tiny in} = a_1 \delta_{t_1}$ is mapped to 
\begin{equation}
\label{eq:reset[]}
b_1 \delta_{t_1} = \mbox{LIF}_{\vartheta, \alpha}(a_1 \delta_{t_1})  , \,\,\,  b_1 = [a_1/\vartheta]\, \vartheta.
\end{equation}
Fig.~\ref{fig:LIF} illustrates LIF with {\it reset-by-subtraction}.  
\begin{figure}[ht]
	\centering
  \includegraphics[width=1\textwidth]{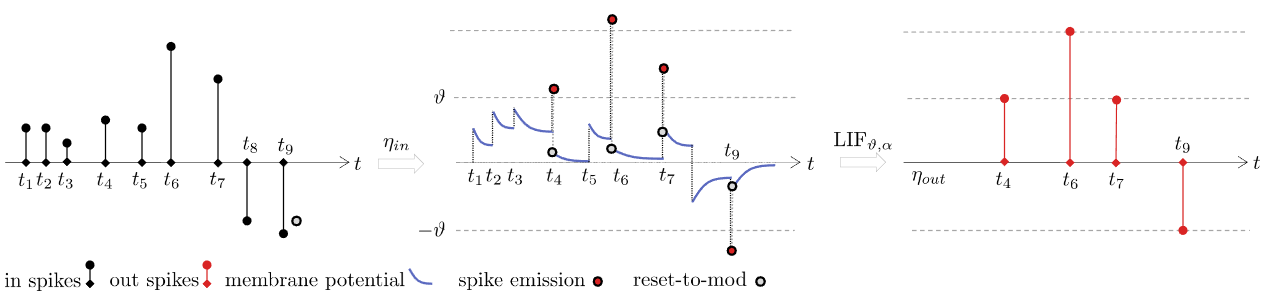}
		\caption{LIF in continuous time with {\it reset-by-subtraction}, resp. {\it reset-to-mod}; at $t_6$ the amplitude $a_6 \in (2\vartheta,3\vartheta) $ of the input spike, which causes a two times cascaded  {\it reset-by-subtraction} resulting in an output spike amplitude $b_6 = 2 \vartheta = [a_6/\vartheta]\,\vartheta$.}
	\label{fig:LIF}
\end{figure}

Depending on the research and application context,  discrete approximations 
of the LIF model~(\ref{eq:LIF}) become popular, 
particularly to simplify  computation and to make the application of deep learning methods to spike trains easier~\cite{snnTorch2021}.
Under the assumptions 
\begin{itemize}
\item[(i)] continuous time $t \in [0,\infty)$ is replaced by discrete time $n \Delta t \in \Delta t\, \mathbb{N}_0$, 
where $\Delta t \ll \alpha$;
\item[(ii)] instantaneous increase, respectively decrease of the membrane's potential~(\ref{eq:u}); 
\end{itemize}
we obtain a discrete computational model, where the
input signal $\eta_{in} = \sum_i a_i \delta_{t_i}$ in continuous time is replaced by the sequence 
\begin{equation}
\label{eq:ahat}
\hat{a}_k := 
\left\{
\begin{array}[pos]{lcl}
a_i & & k = [t_i/\Delta t], \\
0 & & \mbox{else}\\	
\end{array}
\right.
\end{equation}
which is well-defined if $\Delta t$ is chosen sufficiently small so that at most one Dirac impulse hits a time interval
$I_k = [k \Delta t, (k+1) \Delta t)$.
The amplitudes $\hat{b}_n$, $n = 0,1, \ldots$, of the output spike train are defined as for continuous time.
This way, finally we get the discrete LIF model, 
$\widehat{\mbox{LIF}}_{\vartheta, \beta, \Delta t}: \mathbb{R}^{\mathbb{N}_0} \rightarrow \mathbb{R}^{\mathbb{N}_0}$,
$(\hat{b}_k)_k = \widehat{\mbox{LIF}}_{\vartheta, \beta, \Delta t}((\hat{a}_k)_k)$,
as outlined in Algorithm~\ref{alg:DiscreteLIF}.
\begin{algorithm}
\caption{Simplified Discrete LIF model $\widehat{\mbox{LIF}}_{\vartheta, \beta, \Delta t}$}
\label{alg:DiscreteLIF}
\noindent
{\bf Step 0}:  Initialization: $\hat a = (\hat{a}_k)_k$, $u_0:=0$, $\hat{b}_0 = 0$,
$\beta := (1 - \frac{\Delta t}{\alpha})$;
\\
\noindent
{\bf Step 1}: Update Membrane Potential: 
$u_{n+1}:= \beta\, u_{n} +  \hat{a}_n - \hat{b}_n$ 
\noindent\\
{\bf Step 2}: Check Threshold: Update time $n\mapsto n+1$ and check whether 
$|u_{n}| \geq \vartheta$.
If 'no', then set $\hat{b}_n:= 0$ and repeat {\bf Step 1}; if 'yes' then output 
a spike at time step with amplitude $u_{n}$ and move on to {\bf Step 3}.
\noindent\\
{\bf Step 3}:  Discharge Event: According to the re-initialization mode set
\begin{equation}
\label{eq:DisReset}
\hat{b}_n := 
\left\{
\begin{array}[2]{lcl}
	u_n  									& \ldots & \mbox{for {\it reset-to-zero}}, \\
	\mbox{sgn}(u_n) \vartheta			& \ldots & \mbox{for {\it reset-by-subtraction}}, \\
	{\left[u_n / \vartheta \right]}\vartheta   & \ldots & \mbox{for {\it reset-to-mod}}.	
\end{array}
\right.
\end{equation}
\\
{\bf Step 4:}  Repeat steps {\bf Steps 1,2,3} until all input spikes are processed.
\end{algorithm}

With this mathematical clarification of the LIF model, in continuous and discrete time, we are in the position
to study integrate-and-fire as spike quantization and provide upper bounds for the quantization error in Section~\ref{ss:Quantization}. 

In this paper we consider feed-forward spiking neural networks, $\mbox{SNN}: \mathbb{S}^{N_0} \rightarrow \mathbb{S}^{N_L}$ which are mappings given by weighted directed acyclic graphs $(V,E)$ connecting LIF units with fixed parameters 
$\vartheta$ and $\alpha$.  
$\mbox{SNN}$ takes $N_0$ spike trains as input and maps them to $N_L$ output spike trains.
The underlying graph can be arranged in hierarchies starting from the first layer $1$ up to layer $L$.
We enumerate the LIF nodes in the $k$th layer by $(k, i_k)$, where $i_k \in \{1, \ldots, N_k\}$.

\begin{wrapfigure}{r}{9cm}
  \includegraphics[width=8.5cm]{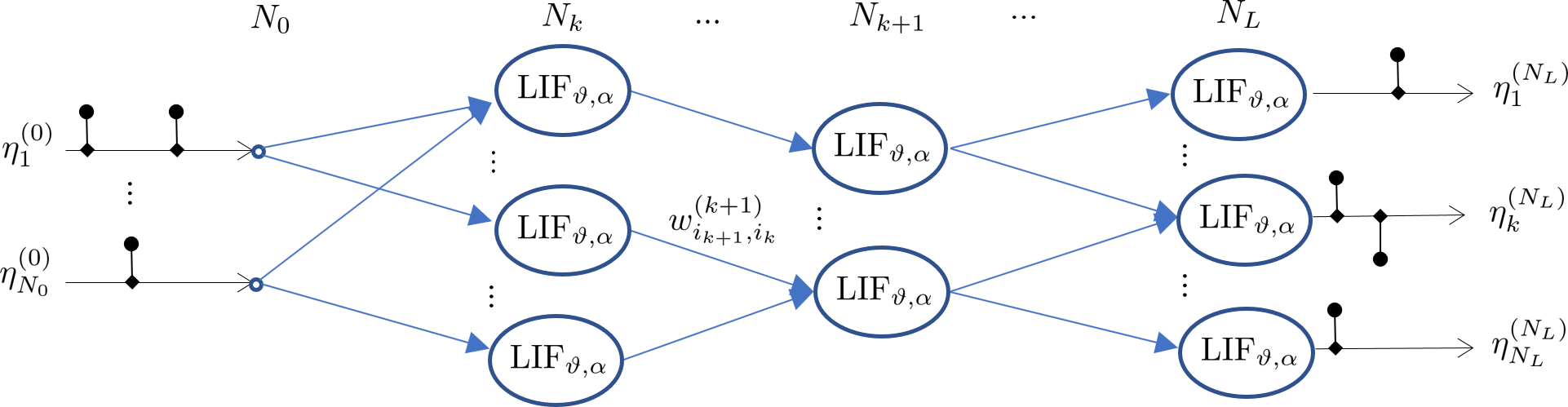}
  \caption{SNN as weighted directed acyclic graph.}
		\label{fig:SNN}
\end{wrapfigure} 

For convenience we consider the input spike trains as $0$ layer.
The weight $w^{(k+1)}_{i_{k+1}, i_{k}}$ of an edge $[(k, i_k), (k+1, i_{k+1})] \in E$ 
connecting the $i_k$-th neuron in the $k$-th layer with the $i_{k+1}$-th neuron in the $(k+1)$-th layer 
rescales accordingly the weights of the spike train being transmitted from the former neuron to the latter. 
See Fig.~\ref{fig:SNN} for an illustration.
This way the mapping $\mbox{SNN}$ can be represented by the tuple of weight matrices $W = [W^{(1)},\ldots,W^{(N_L)}]$,
 where 
\begin{eqnarray}
\label{eq:SNNW}
\mbox{SNN} & = & [W^{(1)},\ldots,W^{(N_L)}], \\
W^{(k+1)} &=& (w^{(k+1)}_{i_{k+1}, i_{k}}) \in \mathbb{R}^{N_{k+1} \times N_k}\nonumber.
\end{eqnarray}

\section{Which Topology for the Space of Spike Trains is Appropriate?}
\label{s:Postulates}
Our approach starts with two main postulates a topology for spike trains should satisfy, see Fig.~\ref{fig:Postulates}:
\begin{center}
\fbox{\parbox{13.5cm}{
{\it Postulate 1:}~Two spike trains that differ only by small delays of their spikes or small additive noise should be considered close where the notion of closeness should not depend on the number of spikes. 
\\
{\it Postulate 2:}~Small perturbations in the system's configuration parameters should end up in similar input-output behavior, e.g., if the threshold deviates only by some small value. 
}
}
\end{center}
\begin{figure}[ht]
	\centering
  \includegraphics[width=0.9\textwidth]{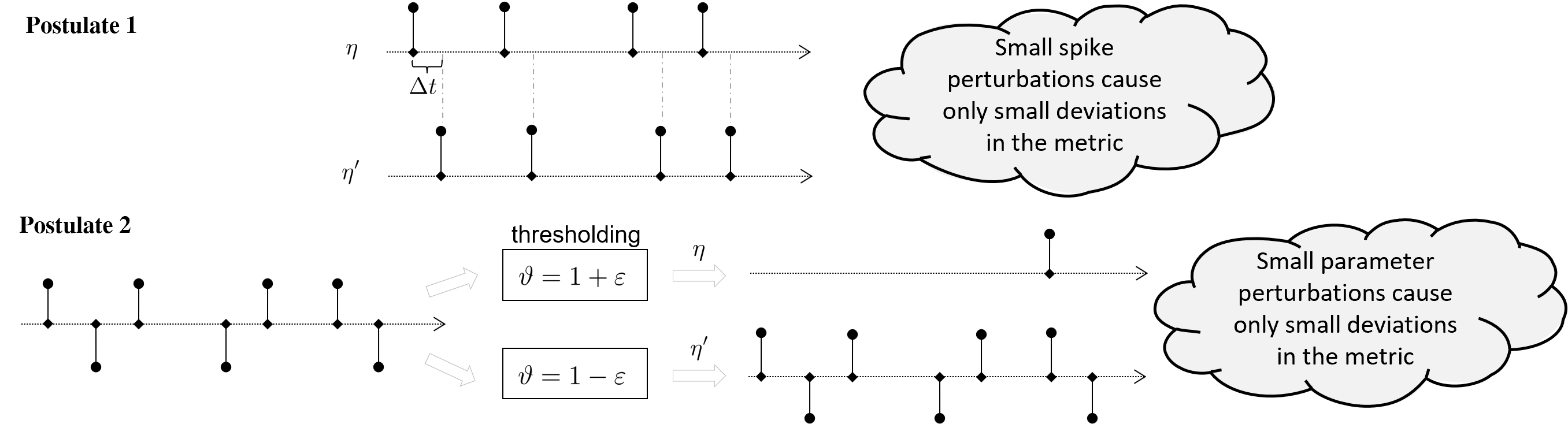}
		\caption{Postulates for an adequate metric $d(.,.)$ for spike strains.}
	\label{fig:Postulates}
\end{figure}
Note that the widely spread Euclidean approach based on summing up squared differences does not meet these postulates.
For example, in the context of back propagation expressions of the type $d_E(\eta, \eta')^2:= \sum_i (t_i-t_i')^2$ are used, 
see, e.g.~\cite{Bohte2000}. Apart from the problem that this definition is only well defined if there is a one-to-one correspondence between the spikes in the first and the second spike train. This ansatz may be useful for certain algorithms in a certain setting, but due to the lack of well-definedness it is not suitable for an axiomatic foundation of a generally valid theory. For example, this ansatz is not well-defined in the scenario of Postulate 2 if the first spike train is empty and the second not. Also Postulate 1 is problematic as the error depends on the number of spikes. 
A large error can  result from a large delay of a single spike or of many small delays. 
See also~\cite{MoserNatschlaeger2014Stability,Moser15a}. 
In addition, the sign of the spike is not taken into account.

\subsection{Alexiewicz Topology}
\label{ss:AlexiewiczTopology}
In order to get an idea about signals that should be considered close in the topology 
let us consider the set $C$ of all sub-threshold input spike trains to a LIF neuron 
$\mbox{LIF}_{\vartheta, \alpha}: \mathbb{S} \rightarrow \mathbb{S}$.
Note that $C$ is the pre-image of LIF of the empty spike trains, i.e., 
$C := \mbox{LIF}_{\vartheta, \alpha}^{(-1)}(\{\emptyset)\}$. 
$C$  is obviously not a closed set, as, e.g., all spike trains $\eta_k := (\vartheta-1/k) \delta_{t_1}$ are below threshold but not its
pointwise limit. Taking also all limits into account we obtain a notion of closure $\overline{C}$ of $C$.
$\overline{C}$ can be characterized in the following way, see~\ref{ss:th:closureIF}.

\begin{lemma}
\label{th:closureIF}
For a leaky integrate-and-fire neuron $\mbox{LIF}_{\vartheta, \alpha}: \mathbb{S} \rightarrow \mathbb{S}$ with $0 \leq \alpha < \infty $ we have:
\begin{equation}
\label{eq:closure}
\eta = \sum_i a_i \delta_{t_i}  \in \overline{\mbox{LIF}_{\vartheta, \alpha}^{(-1)}(\{\emptyset)\}} \Longleftrightarrow
\max_{n}\left|\sum_{i=1}^n a_i e^{-\alpha (t_{n} - t_i)}\right| \leq \vartheta. \nonumber
\end{equation}
\end{lemma}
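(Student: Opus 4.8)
The plan is to characterize $\overline{C}$ directly by understanding which spike trains arise as limits of sub-threshold ones, using the explicit recursion for the membrane potential. First I would unwind the condition $\eta \in C = \mathrm{LIF}_{\vartheta,\alpha}^{(-1)}(\{\emptyset\})$: since no output spike is ever generated, no discharge event occurs, so the reset term in~(\ref{eq:u}) vanishes identically and the membrane potential at the $n$-th input spike time is simply $u_n = \sum_{i=1}^n a_i e^{-\alpha(t_n - t_i)}$ (using $t_0 = 0$, $a_0 = 0$). The firing condition~(\ref{eq:LIF}) is never triggered precisely when $|u_n| < \vartheta$ for all $n$ — one should check that between input spike times the potential only decays in absolute value toward zero (pure exponential relaxation with no input), so the supremum of $|u_{\alpha}(\cdot)|$ over all of $[0,\infty)$ is attained at one of the spike times $t_n$. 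Hence $\eta \in C \iff \max_n |u_n| < \vartheta$, with $u_n$ as above.

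Next I would take the closure. Denote $N(\eta) := \max_n \left|\sum_{i=1}^n a_i e^{-\alpha(t_n - t_i)}\right|$. The map $\eta \mapsto N(\eta)$ is continuous with respect to the topology in which we take limits (pointwise convergence of the amplitude-time data, with a fixed or convergent set of spike times — this is the delay-plus-additive-noise notion from Postulate 1): it is a finite max of continuous functions of the $a_i$ and $t_i$. Therefore $C = \{N < \vartheta\}$ is open in this topology, and its closure is contained in $\{N \leq \vartheta\}$. For the reverse inclusion, given any $\eta$ with $N(\eta) \leq \vartheta$, I would exhibit an explicit approximating sequence inside $C$, e.g.\ $\eta_k := (1 - 1/k)\,\eta = \sum_i (1-1/k) a_i \delta_{t_i}$; by homogeneity $N(\eta_k) = (1-1/k) N(\eta) \le (1-1/k)\vartheta < \vartheta$, so $\eta_k \in C$ and $\eta_k \to \eta$. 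This gives $\{N \le \vartheta\} \subseteq \overline{C}$, hence equality, which is exactly the claimed equivalence.

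The step I expect to be the main obstacle — and the one deserving the most care — is the reduction from the continuous-time firing criterion to the discrete check at spike times, i.e.\ justifying that $\sup_{t \ge 0} |u_{\alpha}(t_i^{(1)}, t)|$ over the sub-threshold evolution equals $\max_n |u_n|$. One has to argue that on each open interval $(t_n, t_{n+1})$ between consecutive input spikes the potential satisfies $\dot u = -\alpha u$ (no input, no reset, since $\eta \in C$), so $|u(t)| = |u_n| e^{-\alpha(t - t_n)}$ is non-increasing there, and that the only upward jumps happen exactly at the $t_i$ where $u$ jumps by $a_i$; consequently the threshold is first reached, if ever, at some $t_n$, and avoiding firing is equivalent to $|u_n| < \vartheta$ for every $n$. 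A minor subtlety to note is the boundary case $|u_n| = \vartheta$ exactly: by the $\inf$ and $\geq$ in~(\ref{eq:LIF}) this already triggers a spike, so strict inequality $N(\eta) < \vartheta$ defines $C$ — which is why the closure picks up precisely the non-strict inequality $N(\eta) \le \vartheta$, consistent with the statement. The case $\alpha = 0$ is included automatically since then $e^{-\alpha(t_n - t_i)} = 1$ and $u_n = \sum_{i=1}^n a_i$ is the running sum, recovering the plain integrate-and-fire / Alexiewicz picture.
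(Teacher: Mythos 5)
Your proof is correct and follows the same overall architecture as the paper's: identify $C=\mathrm{LIF}_{\vartheta,\alpha}^{(-1)}(\{\emptyset\})$ with the strict sublevel set $\{N<\vartheta\}$ of $N(\eta)=\max_n|\sum_{i\le n}a_i e^{-\alpha(t_n-t_i)}|$, observe that passing to limits turns the strict bound into the non-strict one, and exhibit for every $\eta$ with $N(\eta)\le\vartheta$ an approximating sequence inside $C$. The one place where you genuinely diverge is the second direction: the paper builds its approximants by a sign-alternating $\varepsilon$-perturbation of the amplitudes tied to the indices where partial sums reach $\pm\vartheta$, whereas you simply take $\eta_k=(1-1/k)\eta$ and invoke positive homogeneity of $N$ to get $N(\eta_k)<\vartheta$. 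Your construction is cleaner and less error-prone (the paper's perturbation is stated only sketchily), and it also makes explicit the reduction from the continuous-time firing criterion to the check at spike times (exponential decay between spikes), which the paper leaves implicit. Both arguments share the same unstated dependence on the topology in which the closure is taken --- one needs $N$ continuous, i.e.\ the convergence of amplitude--time data you describe --- so this is not a gap relative to the paper, and you are right to flag it.
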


\begin{wrapfigure}{r}{5cm}
  \includegraphics[width=4.5cm]{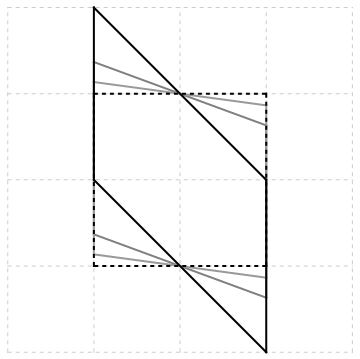}
	\caption{$\|.\|_{A,\alpha}$-unit balls for spike trains $\eta = a_1 \delta_{0} + a_2 \delta_{1}$ for
	$\|.\|_{A,\alpha}$ with $\alpha = 0$ (solid black), $\alpha = \infty$ (dashed), $\alpha = 1$ and $\alpha = 2$ (gray)}
  		\label{fig:Unitballs}
\end{wrapfigure}

Note that 
\begin{equation}
\label{eq:A}
\|([a_i; t_i])_i\|_{A, \alpha}:=
\max_n\left| 
\sum_{j=1}^n a_j e^{-\alpha (t_n - t_j)}
\right| 
\end{equation}
defines a norm on the vector space $\mathbb{S}$, which justifies the $\|.\|$ notation. 
As immediate consequences from the definition (\ref{eq:A}) we obtain
\begin{equation}
\label{eq:Aalt}
\|\eta\|_{A, \alpha} = \inf\left\{\vartheta >0:  \mbox{LIF}_{\vartheta, \alpha}(\eta)=\emptyset \right\},
\end{equation}
and
\begin{equation}
\label{eq:Aalt}
\forall \alpha, \beta \in (0,\infty), \eta \in \mathbb{S}:\| \mbox{LIF}_{\vartheta, \alpha}(\eta(\cdot))\|_{A, \alpha} = 
\| \mbox{LIF}_{\vartheta, \beta}\left(\eta(\alpha/\beta\, \cdot)\right)\|_{A, \beta}. 
\end{equation}

This way, $\overline{\mbox{LIF}_{\vartheta, \alpha}^{(-1)}(\{\emptyset)\}}$ turns out to be the ball 
$B_{A,\alpha}(\vartheta)$ centered at $\emptyset$ of radius $\vartheta$ w.r.t the norm $\|.\|_{A, \alpha}$.
For $\alpha =0$ the length of the  time intervals between the events do not have any effect, and we get
the norm  $\|(a_i)_i\|_{A, 0} = \max_n \left|\sum_{i=1}^n a_i\right|$.
By looking at $a_i$ as width of a step of a walk up and down along a line, $\|.\|_{A, 0}$ marks 
the maximum absolute route amplitude of the walk.

Range measures are studied in the field of random walks in terms of an asymptotic distribution resulting from diffusion process~\cite{Finch2018, Jain1968}. 
A similar concept is given in terms of the diameter $\|(a_i)_i\|_D$ of a walk, i.e.,
$\|(a_i)_i\|_D := \max_{1\leq m\leq n\leq N} \left|\sum_{i=m}^n  a_i\right|$,
which immediately can be generalized to 
$\|(a_i)_i\|_{D, \alpha} := \max_{1\leq m\leq n\leq N} 
| \sum_{j=m}^n a_j e^{-\alpha (t_n - t_j)} |
$.
Note that 
$\|(a_i)_i\|_{A, \alpha} \leq \|(a_i)_i\|_{D, \alpha} \leq 2 \|(a_i)_i\|_{A, \alpha}$
 stating the norm-equivalence of $\|.\|_{A, \alpha}$ and $\|.\|_{D, \alpha}$.

While the unit ball $B_{A, 0}$ w.r.t $\|.\|_{A,0}$ can be understood by shearing the hypercube $[-1,1]^N$, see~\ref{A:normA}, the 
geometric characterization of the related unit ball $B_{D,0}$ of $\|.\|_{D,0}$ is more tricky, see~\cite{Moser12UnitBall}.

For an illustration of the corresponding unit balls for two spikes (2D case) see 
Fig.~\ref{fig:Unitballs}. 

These concepts are related to the more general concept of {\it discrepancy} measure, see~\cite{Chazelle2000,Moser11TPAMI}, which goes back to Hermann Weyl~\cite{Weyl1916} and is defined on the basis of a family $\mathcal{F}$ of subsets $F$ of the universe of discourse, i.e., 
$\mu((a_i)_i) = \sup_{F \in \mathcal{F}}| \sum_{i \in F} a_i|$. For $\|.\|_A$ the family $\mathcal{F}$ consists of all index intervals $\{0, \ldots, m\}$, while for $\|.\|_D$ the family 
$\mathcal{F}$ consists of all partial intervals $\{m, \ldots, n\}$, $m, n \in \{1, \ldots, N\}$. 
Therefore we refer particularly to $\|.\|_A$, resp. $\|.\|_D$, as example of a {\it discrepancy measure}. 

An analogous concept, $\|f\|:= \sup_{[a,b]} |\int_{[a,b]} f d\mu|$,  can be defined for functions $f$ and 
tempered distributions such as Dirac delta impulses by using integrals instead of the discrete sum, which is known in the literature as {\it Alexiewicz} semi-norm~\cite{Alexiewicz1948}. As spike trains live in continuous time, in the end our topology we are looking for is the Alexiewicz topology, which 
meets the postulates above. However, most of the reasoning and proofs in the context of this paper can be boiled down 
to  discrete sequences, hence utilizing the discrepancy norm.

\subsection{Spike Train Quantization}
\label{ss:Quantization}
Interestingly, as pointed out by~\cite{MoserLunglmayrESANN2023}, LIF  can be understood as a $\|.\|_{A,\alpha}$-quantization operator satisfying
\begin{equation}
\label{eq:quantization}
\|\mbox{LIF}_{\vartheta, \alpha}(\eta) - \eta\|_{A, \alpha} < \vartheta.
\end{equation}
\cite{MoserLunglmayrESANN2023} provides a proof of (\ref{eq:quantization})
for weighted Dirac impulses as input signal to the LIF, see~\ref{A:Dirac:th:quantization}.
Here, first we note that (\ref{th:quantizationDirac}) also applies to the discrete version of Algorithm~\ref{alg:DiscreteLIF}. The proof is analogous. Second, we state a generalization to piecewise continuous functions.  
This way, we show that LIF acts like a
signal-to-spike-train quantization.
The generalization of Dirac pulses to more general classes of signals 
is especially important for a unifying theory that combines analog spike sampling and SNN-based spike-based signal processing.
An extension to the general class of locally integrable functions is also possible but requires the introduction of the Henstock-Kurzweil integral~\cite{Kurtz2004TheoriesOI}, which is postponed to future research. 
\begin{theorem}
\label{th:quantization}
(\ref{eq:quantization})  also holds for piecewise continuous functions $\eta$, i.e., functions having at most a finite number of 
discontinuities. 
\end{theorem}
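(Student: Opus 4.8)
The plan is to transfer the argument behind the Dirac-spike-train version of (\ref{eq:quantization}) to the function setting, replacing partial sums by partial leaky integrals. First I would fix the continuous-time Alexiewicz seminorm $\|f\|_{A,\alpha}:=\sup_{T\ge 0}\bigl|\int_0^T e^{-\alpha(T-\tau)}f(\tau)\,d\tau\bigr|$, which reduces to (\ref{eq:A}) on finite weighted Dirac combs and to the classical Alexiewicz norm for $\alpha=0$; since $\eta$ is piecewise continuous (hence locally bounded, with bounded time support, the analogue of the finiteness assumption on spike trains) its leaky primitive is continuous and the supremum is effectively over a compact interval. The goal is then to show that this supremum, evaluated on the difference signal $\mbox{LIF}_{\vartheta,\alpha}(\eta)-\eta$, stays below $\vartheta$.

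The pivotal step is the potential identity: for the \emph{reset-to-mod} mode the membrane potential satisfies
\[
u_\alpha(t)\;=\;\int_0^t e^{-\alpha(t-\tau)}\bigl(\eta(\tau)-\mbox{LIF}_{\vartheta,\alpha}(\eta)(\tau)\bigr)\,d\tau .
\]
This holds because the discharge term in (\ref{eq:reset}) removes from the potential exactly the amplitude $b_i=[u_i/\vartheta]\vartheta$ of the spike emitted at $t_i^{(1)}$, so the net current driving the RC integrator (\ref{eq:u}) is precisely $\eta-\mbox{LIF}_{\vartheta,\alpha}(\eta)$; one verifies it by splicing (\ref{eq:u}) across successive firing times and using the semigroup property $e^{-\alpha(t-t_i)}e^{-\alpha(t_i-\tau)}=e^{-\alpha(t-\tau)}$ of the leaky kernel. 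Given the identity, $\|\mbox{LIF}_{\vartheta,\alpha}(\eta)-\eta\|_{A,\alpha}=\sup_t|u_\alpha(t)|$, so the theorem reduces to the same invariant used in the discrete case: $|u_\alpha(t)|<\vartheta$ for every $t$. On the open interval before the next firing this is exactly the firing rule (\ref{eq:LIF}) read as an infimum of times at which $|u_\alpha|\ge\vartheta$; at a firing time $s$ the \emph{reset-to-mod} step replaces $u_\alpha(s^-)$ by its residue $u_\alpha(s^-)-[u_\alpha(s^-)/\vartheta]\,\vartheta$, whose modulus is $<\vartheta$ by definition (\ref{eq:[]}); and past the last spike $|u_\alpha|$ merely decays. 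For $\alpha>0$ one may either run this directly or reduce to a normalized leak via the time-rescaling identity for $\|\cdot\|_{A,\alpha}$; for the discrete model of Algorithm~\ref{alg:DiscreteLIF} the same computation goes through verbatim with sums in place of integrals.

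I expect the real obstacle to be the bookkeeping of the supremum in the hybrid continuous/discrete setting rather than the potential bound. The difference signal $\mbox{LIF}_{\vartheta,\alpha}(\eta)-\eta$ is a piecewise continuous function plus a finite Dirac train, so its leaky primitive is continuous except for downward jumps at the output spike times, and the left-limit of that primitive at a firing instant equals $\vartheta$ for a purely continuous component, so a careless estimate yields only $\le\vartheta$. The point to make precise is that the value of the primitive \emph{at} a firing time already incorporates the subtracted spike and therefore equals the sub-threshold residue: hence every partial leaky integral of the difference is $<\vartheta$, and -- because $\eta$ is piecewise continuous -- the supremum is attained on a compact interval at a time that is either a non-firing time (threshold not yet reached) or a firing time (residue strictly below threshold), in either case strictly below $\vartheta$. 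A secondary subtlety is checking the potential identity across a reset when $\eta$ carries a continuous component straddling the firing instant, which is precisely where piecewise continuity (no Dirac part in $\eta$) is used.
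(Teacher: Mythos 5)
Your route is genuinely different from the paper's: the paper proves the theorem by reduction, using the mean value theorem for integrals to build a surrogate Dirac train $\hat\eta$ whose partial leaky integrals agree with those of $\eta$ at a finite grid containing the output firing times, so that $\mbox{LIF}_{\vartheta,\alpha}(\eta)=\mbox{LIF}_{\vartheta,\alpha}(\hat\eta)$ and the already-proved discrete bound of Theorem~\ref{th:quantizationDirac} transfers verbatim. You instead argue directly in continuous time via the potential identity $u_\alpha(t)=\int_0^t e^{-\alpha(t-\tau)}\bigl(\eta(\tau)-\mbox{LIF}_{\vartheta,\alpha}(\eta)(\tau)\bigr)\,d\tau$, which is indeed correct for \emph{reset-to-mod} (it is the continuous-time form of the invariant $\hat s_{i_k}=\Delta_{i_k}$ established by induction in~\ref{A:Dirac:th:quantization}) and is arguably the cleaner way to see what is going on.

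The gap is in your last step. With your definition $\|f\|_{A,\alpha}=\sup_{T}\bigl|\int_0^T e^{-\alpha(T-\tau)}f(\tau)\,d\tau\bigr|$, the strict inequality you want is simply false for inputs with a genuinely continuous part: take $\alpha=0$, $\vartheta=1$ and $\eta\equiv c>0$. The leaky primitive of $\eta-\mbox{LIF}_{1,0}(\eta)$ ramps continuously up to $1$ on $[0,1/c)$ and only then jumps down by the emitted spike, so $\sup_T|\cdots|=\vartheta$ exactly. Your claim that the supremum is ``attained \ldots\ in either case strictly below $\vartheta$'' is where this breaks: the primitive is not continuous (it has downward jumps at the output spike times), so the supremum need not be attained, and in this example it is a left limit at a firing instant and equals $\vartheta$. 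You correctly flag this as the main obstacle, but your resolution does not close it. The paper avoids the issue (somewhat silently) because after the reduction the norm of $\hat\eta-\mbox{LIF}(\hat\eta)$ is, by definition~(\ref{eq:A}), a maximum over the \emph{discrete} spike times of $\hat\eta$, which are exactly the points at which the post-reset residue, not the pre-reset left limit, is read off. To repair your argument you must either adopt that discretized reading of the norm for the difference signal, or weaken the conclusion to $\le\vartheta$ for inputs whose continuous part can drive the potential to the threshold.
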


\begin{proof}
The idea is to construct a $\hat{\eta} = \sum_i a_i \delta_{t_i} \in \mathbb{S}$ such that 
$\int_{t_0}^{t_i}	\hat{\eta}(t) e^{\alpha t} dt = \int_{t_0}^{t_i} \eta(t) e^{\alpha t} dt$ for all $t_i$.
This can be achieved by utilizing the {\it mean value theorem for integrals}. 
First, partition the time domain into intervals $U_k = (u_{k-1}, u_k)$ on which $\eta$ is continuous. 
On $U_k$, the mean value theorem guarantees the existence of $s_k \in U_k$ such that 
$\int_{U_k} \eta(t) e^{\alpha t} dt = |U_k| \eta(s_k)e^{\alpha s_k}$. 
Then define the sequence $(t_i)_i$ consisting of all $s_k$ and the $u_k$. For $t_i = s_k$ define 
$a_i := |U_k| \eta(s_k)$ and for  $t_i = u_k$ define 
$a_i := \lim_{\varepsilon \rightarrow 0} \int_{u_k-\varepsilon}^{u_k+\varepsilon} \eta dt$.
Refine this partition so that also the time points $t_i^*$ of the output spikes of 
$\mbox{LIF}_{\vartheta, \alpha}(\eta) = \sum_i b_i \delta_{t_i^*}$
are taking into account as border points of the $U_i$ intervals. 
This way we obtain for all $t_i$:
\begin{equation}
\label{eq:generalQuant1}
\int_{t_0}^{t_i} \eta(t) e^{- \alpha (t_i - t)} dt = \int_{t_0}^{t_i} \hat{\eta}(t) e^{- \alpha (t_i - t)} dt.
\end{equation}
Moreover, since all time points $t_i^*$ are contained in $(t_i)_i$, we also have 
\begin{equation}
\label{eq:generalQuant2}
\mbox{LIF}_{\vartheta, \alpha}(\eta) = \mbox{LIF}_{\vartheta, \alpha}(\hat{\eta}).
\end{equation}
Putting (\ref{eq:generalQuant1}) and (\ref{eq:generalQuant2}) together closes the proof.
\end{proof}

Fig.~\ref{fig:quantization} illustrates the quantization for different values of $\alpha$ w.r.t $\|.\|_{A, \alpha}$.
Note that for $\alpha \rightarrow \infty$ we obtain the standard $\|.\|_{\infty}$-quantization.
\begin{figure}[ht]
	\centering
  \includegraphics[width=1\textwidth]{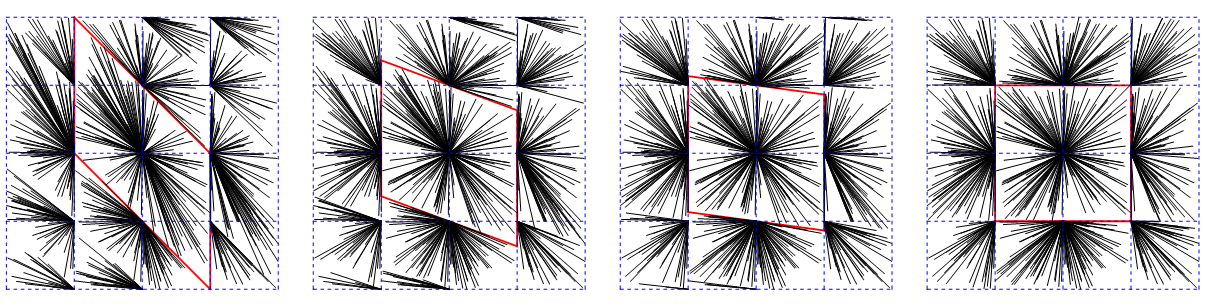}
		\caption{Quantization w.r.t $\|.\|_{A,\alpha}$, $\alpha \in \{0,1,2,\infty\}$ for spike trains $\eta = a_1\delta_0 + a_2\delta_1$ 
		with random $a_i \in [-2,2]$; the corresponding unit balls are marked red;
		the arrows are connecting points with their quantization points.}
	\label{fig:quantization}
\end{figure}

Like for threshold-based sampling~\cite{Moser2017Similarity,MoserLunglmayr2019QuasiIsometry} we also obtain 
quasi isometry in the discrete case, though the situation and the way of proving it is different.
Here in this context, we get it as a byproduct of~Theorem~\ref{th:quantization}.
\begin{corollary}[LIF Quasi Isometry]
\label{cor:QuasiIsometry}
The norm $\|.\|_{A, \alpha}$, $\vartheta>0$, establishes quasi isometry for the LIF neuron model, i.e.,
\begin{equation} 
\label{eq:quIsometry}
\|\eta_1 - \eta_2  \|_{A, \alpha}  - 2 \vartheta \leq
\|\mbox{LIF}_{\vartheta, \alpha}(\eta_1) - \mbox{LIF}_{\vartheta, \alpha}(\eta_2)  \|_{A, \alpha} \leq 
\|\eta_1 - \eta_2  \|_{A, \alpha}  + 2 \vartheta
\end{equation}
and asymptotic isometry, i.e., 
\begin{equation}
\label{eq:asymIsometry}
\lim_{\vartheta\rightarrow 0} \|\mbox{LIF}_{\vartheta, \alpha}(\eta_1) - \mbox{LIF}_{\vartheta, \alpha}(\eta_2)  \|_{A, \alpha}
= \|\eta_1 - \eta_2  \|_{A, \alpha} 
\end{equation}
for all $\eta_1, \eta_2 \in \mathbb{S}$.
\end{corollary}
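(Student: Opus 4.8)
The plan is to read off both inequalities in~(\ref{eq:quIsometry}) directly from the quantization estimate~(\ref{eq:quantization}) --- in the Dirac version valid for spike-train inputs, cf.~\cite{MoserLunglmayrESANN2023} and~\ref{A:Dirac:th:quantization} --- together with the triangle inequality in the normed space $(\mathbb{S}, \|\cdot\|_{A,\alpha})$; no further analysis of the LIF dynamics is needed. Write $Q_j := \mbox{LIF}_{\vartheta, \alpha}(\eta_j)$ and $r_j := Q_j - \eta_j \in \mathbb{S}$ for $j = 1,2$. Applying~(\ref{eq:quantization}) to each input separately, with one and the same threshold $\vartheta$, gives $\|r_1\|_{A,\alpha} < \vartheta$ and $\|r_2\|_{A,\alpha} < \vartheta$.

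For the upper bound, decompose $Q_1 - Q_2 = (\eta_1 - \eta_2) + (r_1 - r_2)$ and apply the triangle inequality:
\begin{equation}
\|Q_1 - Q_2\|_{A,\alpha} \leq \|\eta_1 - \eta_2\|_{A,\alpha} + \|r_1\|_{A,\alpha} + \|r_2\|_{A,\alpha} < \|\eta_1 - \eta_2\|_{A,\alpha} + 2\vartheta. \nonumber
\end{equation}
For the lower bound, decompose $\eta_1 - \eta_2 = (Q_1 - Q_2) - (r_1 - r_2)$ and argue in the same way to obtain $\|\eta_1 - \eta_2\|_{A,\alpha} \leq \|Q_1 - Q_2\|_{A,\alpha} + 2\vartheta$, that is $\|Q_1 - Q_2\|_{A,\alpha} \geq \|\eta_1 - \eta_2\|_{A,\alpha} - 2\vartheta$. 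Taken together these yield~(\ref{eq:quIsometry}) --- indeed with strict inequalities, which a fortiori gives the stated non-strict form. The asymptotic isometry~(\ref{eq:asymIsometry}) is then immediate: the norm $\|\cdot\|_{A,\alpha}$ does not itself depend on $\vartheta$ (only the operator $\mbox{LIF}_{\vartheta, \alpha}$ does), so the two outer bounds in~(\ref{eq:quIsometry}) both tend to $\|\eta_1 - \eta_2\|_{A,\alpha}$ as $\vartheta \to 0$ and the middle term is squeezed to the same limit.

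I do not expect a genuine obstacle: the real work sits in Theorem~\ref{th:quantization} and its Dirac analogue, so the corollary is, as claimed, essentially a byproduct. The only points that need a little attention are (i) that $(\mathbb{S}, \|\cdot\|_{A,\alpha})$ really is a normed space, so that the triangle inequality is available --- this is exactly the remark following~(\ref{eq:A}); (ii) that~(\ref{eq:quantization}) is invoked with the \emph{same} $\vartheta$ for $\eta_1$ and $\eta_2$, so that each residual $r_j$ costs at most $\vartheta$ and the constant in~(\ref{eq:quIsometry}) is precisely $2\vartheta$; and (iii) that, in contrast to the threshold-based sampling setting of~\cite{Moser2017Similarity,MoserLunglmayr2019QuasiIsometry}, $\mbox{LIF}_{\vartheta, \alpha}(\eta_1) - \mbox{LIF}_{\vartheta, \alpha}(\eta_2)$ need not be the LIF image of any spike train, but the argument never uses such a property --- only the vector-space structure of $\mathbb{S}$ and the per-input bound~(\ref{eq:quantization}).
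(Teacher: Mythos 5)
Your proof is correct and follows exactly the route the paper intends: the corollary is presented there as an immediate byproduct of the quantization bound~(\ref{eq:quantization}) combined with the triangle inequality in $(\mathbb{S},\|\cdot\|_{A,\alpha})$, which is precisely the argument the paper spells out for Corollary~\ref{cor:BoundonThetaPerturbation} and leaves implicit here. Your added remarks (same $\vartheta$ for both inputs, the squeeze for~(\ref{eq:asymIsometry})) are accurate and complete the sketch.
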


Theorem~\ref{th:quantization} together with the quasi isometry property (\ref{eq:quIsometry}) immediately gives an answer to our Postulates 1 and 2 in Section~\ref{s:Postulates} in terms of Corollary~\ref{cor:BoundonLag} and Corollary~\ref{cor:BoundonThetaPerturbation}. 
Because of the discontinuity of thresholding the best what we can expect is an error bound in order of the threshold $\vartheta$.
The error bound~(\ref{eq:lag1}) in Corollary~\ref{cor:BoundonLag} caused by a small lag is remarkable as it 
asymptotically depends  only on the threshold and the maximal spike amplitude and not, e.g., 
on the spike frequency. This property is typical for the Alexiewicz norm and related metrics such as the discrepancy measure and contrasts the Euclidean geometry and its related concept of measuring correlation, see also~\cite{MoserSB11SIMBAD}.
For the proof we refer to~\ref{A:Lag}.
\begin{corollary}[Error Bound on Lag]
\label{cor:BoundonLag}
For $\eta = \sum_i a_i \delta_{t_i}\in \mathbb{S}$ 
and sufficiently small $\Delta t$ we get the error bound: \\
\begin{eqnarray}
\label{eq:lag1}
 \left\| 
\mbox{LIF}_{\vartheta, \alpha}(\eta(\cdot - \Delta t)) -
\mbox{LIF}_{\vartheta, \alpha}(\eta(\cdot))   
\right\|_{A, \alpha} & \leq & \nonumber \\
 \max_i|a_i|+ 2\,\vartheta + \Delta t \,\alpha (\|\eta\|_{A, \alpha} + \max_i|a_i|)+ O(\Delta t^2). & & \nonumber
\end{eqnarray}
\end{corollary}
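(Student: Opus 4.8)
The plan is to reduce the continuous-time lag estimate to a perturbation statement about the input spike train and then invoke the quasi-isometry bound~(\ref{eq:quIsometry}) from Corollary~\ref{cor:QuasiIsometry}. Write $\eta = \sum_i a_i \delta_{t_i}$ and $\eta_{\Delta t} := \eta(\cdot - \Delta t) = \sum_i a_i \delta_{t_i + \Delta t}$. The left-hand side of~(\ref{eq:lag1}) is $\|\mbox{LIF}_{\vartheta,\alpha}(\eta_{\Delta t}) - \mbox{LIF}_{\vartheta,\alpha}(\eta)\|_{A,\alpha}$, and by~(\ref{eq:quIsometry}) this is at most $\|\eta_{\Delta t} - \eta\|_{A,\alpha} + 2\vartheta$. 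So the whole problem collapses to estimating $\|\eta_{\Delta t} - \eta\|_{A,\alpha}$ and showing it is bounded by $\max_i |a_i| + \Delta t\,\alpha(\|\eta\|_{A,\alpha} + \max_i|a_i|) + O(\Delta t^2)$.

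The next step is the core computation: bounding $\|\eta_{\Delta t} - \eta\|_{A,\alpha}$. For sufficiently small $\Delta t$ the shift does not change the order of the spikes, so the merged time grid of $\eta_{\Delta t} - \eta$ consists of the pairs $t_i < t_i + \Delta t$ (assuming no collisions $t_i + \Delta t = t_{i+1}$, which holds for $\Delta t$ small). Evaluating the Alexiewicz partial sums~(\ref{eq:A}) at a cut point $t$, the contributions of $\eta$ and $\eta_{\Delta t}$ largely cancel, leaving essentially one "uncompensated" term of size $|a_i|$ near the current front (hence the $\max_i|a_i|$) plus, for the already-cancelled pairs $t_j, t_j + \Delta t$ with $j < i$, a residual of the form $a_j(e^{-\alpha(t - t_j - \Delta t)} - e^{-\alpha(t - t_j)}) = a_j e^{-\alpha(t-t_j)}(e^{\alpha\Delta t} - 1)$. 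Summing these and using $e^{\alpha\Delta t} - 1 = \alpha\Delta t + O(\Delta t^2)$ together with $|\sum_{j} a_j e^{-\alpha(t-t_j)}| \le \|\eta\|_{A,\alpha}$ (applied at the appropriate cut) gives the $\Delta t\,\alpha\|\eta\|_{A,\alpha}$ term; the leftover cross-term from the front spike contributes the $\Delta t\,\alpha\max_i|a_i|$ part, and all quadratic remainders collect into $O(\Delta t^2)$.

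I expect the main obstacle to be the careful bookkeeping of which partial sum of the difference spike train achieves the maximum, and making the telescoping/cancellation rigorous: the maximizing index $n$ in~(\ref{eq:A}) for $\eta_{\Delta t} - \eta$ need not coincide with the maximizing index for $\eta$, so the step "$|\sum_j a_j e^{-\alpha(t-t_j)}| \le \|\eta\|_{A,\alpha}$" must be applied at a cut point that genuinely corresponds to a prefix of the spikes of $\eta$, not an arbitrary one; one has to argue that truncating the merged grid at the right place yields such a prefix (possibly up to one extra term, which is absorbed into $\max_i|a_i|$). Handling the exceptional near-collision configurations (where $t_i + \Delta t$ sits between $t_i$ and $t_{i+1}$ but close to $t_{i+1}$) is harmless because "sufficiently small $\Delta t$" in the hypothesis lets us assume strict separation. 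Once the prefix-cut subtlety is dispatched, the estimate is a routine first-order Taylor expansion of the exponentials, and combining it with the quasi-isometry upper bound finishes the proof.
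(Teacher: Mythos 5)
Your proposal matches the paper's own argument in Appendix~\ref{A:Lag}: reduce via the quasi-isometry bound~(\ref{eq:quIsometry}) to estimating $\|\eta(\cdot-\Delta t)-\eta\|_{A,\alpha}$, then evaluate the Alexiewicz partial sums on the merged grid, where cancelling pairs leave a factor $(e^{\alpha\Delta t}-1)=\alpha\Delta t+O(\Delta t^2)$ times a prefix sum bounded by $\|\eta\|_{A,\alpha}+\max_i|a_i|$ plus one uncompensated front amplitude. The approach and the resulting bookkeeping are essentially identical to the paper's proof.
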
 
(\ref{th:quantization}) together with the triangle inequality of the norm gives
\begin{eqnarray}
\left\| 
\mbox{LIF}_{\vartheta + \varepsilon, \alpha}(\eta)  - \eta + \eta - \mbox{LIF}_{\vartheta, \alpha}(\eta)   
\right\|_{A, \alpha} 
& \leq &  \nonumber \\
\left\| 
\mbox{LIF}_{\vartheta + \varepsilon, \alpha}(\eta)  - \eta 
\right\|_{A, \alpha} 
+ 
\left\| 
\mbox{LIF}_{\vartheta, \alpha}(\eta) - \eta
\right\|_{A, \alpha} & \leq & 2 \vartheta + \varepsilon, \nonumber 
\end{eqnarray} 
proving Corollary~\ref{cor:BoundonThetaPerturbation}.

\begin{corollary}[Error Bound on Threshold Perturbation]
\label{cor:BoundonThetaPerturbation}
For $\varepsilon>0$ we have
\begin{equation}
\label{eq:BoundonThetaPerturbation}
\sup_{\eta \in \mathbb{S}}\left\| 
\mbox{LIF}_{\vartheta + \varepsilon, \alpha}(\eta)   
-
\mbox{LIF}_{\vartheta, \alpha}(\eta)   
\right\|_{A, \alpha} \leq 2 \vartheta + \varepsilon.
\end{equation}
\end{corollary}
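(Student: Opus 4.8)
The plan is to deduce this uniform estimate directly from the quantization bound of Theorem~\ref{th:quantization} by a single application of the triangle inequality, with the common input $\eta$ used as pivot. Fix $\eta \in \mathbb{S}$ and write
\[
\mbox{LIF}_{\vartheta + \varepsilon, \alpha}(\eta) - \mbox{LIF}_{\vartheta, \alpha}(\eta)
= \bigl(\mbox{LIF}_{\vartheta + \varepsilon, \alpha}(\eta) - \eta\bigr) - \bigl(\mbox{LIF}_{\vartheta, \alpha}(\eta) - \eta\bigr);
\]
since $\|.\|_{A,\alpha}$ is a norm on $\mathbb{S}$, the triangle inequality gives
\[
\bigl\| \mbox{LIF}_{\vartheta + \varepsilon, \alpha}(\eta) - \mbox{LIF}_{\vartheta, \alpha}(\eta) \bigr\|_{A,\alpha}
\le \bigl\| \mbox{LIF}_{\vartheta + \varepsilon, \alpha}(\eta) - \eta \bigr\|_{A,\alpha} + \bigl\| \mbox{LIF}_{\vartheta, \alpha}(\eta) - \eta \bigr\|_{A,\alpha}.
\]

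Next I would bound the two summands separately. The leak parameter $\alpha$ is the same throughout, so both differences live in the single normed space $(\mathbb{S}, \|.\|_{A,\alpha})$ and Theorem~\ref{th:quantization} (the generalization of~(\ref{eq:quantization})) applies verbatim: with threshold $\vartheta + \varepsilon$ it gives $\|\mbox{LIF}_{\vartheta+\varepsilon,\alpha}(\eta) - \eta\|_{A,\alpha} < \vartheta + \varepsilon$, and with threshold $\vartheta$ it gives $\|\mbox{LIF}_{\vartheta,\alpha}(\eta) - \eta\|_{A,\alpha} < \vartheta$. Adding these yields $\|\mbox{LIF}_{\vartheta+\varepsilon,\alpha}(\eta) - \mbox{LIF}_{\vartheta,\alpha}(\eta)\|_{A,\alpha} < 2\vartheta + \varepsilon$, and since this constant does not depend on $\eta$, taking the supremum over $\eta \in \mathbb{S}$ turns the strict inequality into the non-strict bound~(\ref{eq:BoundonThetaPerturbation}).

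I do not anticipate a genuine obstacle: this is essentially a one-line corollary of the quantization theorem. The one point that must be observed is that it is the \emph{threshold} that is perturbed while the leak $\alpha$ stays frozen, so that a single norm hosts all three spike trains and the pivoting step is legitimate; perturbing $\alpha$ as well would change the norm itself and would instead require the scaling relation for $\|.\|_{A,\alpha}$ together with a separate estimate. It is worth remarking that the order $\vartheta$ in the bound cannot be removed, consistently with the discussion preceding the corollary: firing is a discontinuous thresholding operation, so an arbitrarily small change of $\vartheta$ can already flip whether a borderline accumulated potential emits a spike, producing a jump of order $\vartheta$ in the output — already visible for a single input spike of amplitude between $\vartheta$ and $\vartheta + \varepsilon$, for which $\mbox{LIF}_{\vartheta,\alpha}$ fires an amplitude-$\vartheta$ spike while $\mbox{LIF}_{\vartheta+\varepsilon,\alpha}$ stays silent.
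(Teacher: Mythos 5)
Your proof is correct and takes essentially the same route as the paper: the paper likewise inserts $\eta$ as a pivot via $-\eta+\eta$, applies the triangle inequality, and bounds the two resulting terms by the quantization inequality~(\ref{eq:quantization}) with thresholds $\vartheta+\varepsilon$ and $\vartheta$, giving $2\vartheta+\varepsilon$. Your closing remarks on the necessity of the order-$\vartheta$ term and on keeping $\alpha$ fixed are consistent with the paper's surrounding discussion but are not part of its proof.
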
 
(\ref{th:quantization}) can also  be interpreted as spike train decomposition into a part that consists of spikes with amplitudes that are 
signed multiples of the threshold and a sub-threshold residuum. It is interesting that the first part can be further decomposed into 
a sum of unit Alexiewicz norm spike trains. See~\ref{A:Decomposition} for an example.
\begin{theorem}[Spike Train Decomposition]
\label{cor:ADecomposition}
For any $\eta \in \mathbb{S}$ there is a $\psi \in \mathbb{S}$ with  spike amplitudes that are integer multiples of the 
threshold and 
a below-threshold residuum spike train $\rho \in \mathbb{S}$ with $\|\rho\|_{A, \alpha}< \vartheta$, 
such that $\eta = \psi + \rho$, where $\psi = \mbox{LIF}_{\vartheta, \alpha}(\eta)$.
Moreover, $\psi$ can be represented as sum of $\|.\|_{A,0}$-unit spike trains $\Delta\eta_r$, 
$r \in \{1, \ldots, a\}$, $a:=\|\psi\|_{A,0}$, i.e.,
\begin{equation}
\psi= \sum_{r=1}^{a} \Delta\eta_r, 
\end{equation}
where $\|\Delta\eta_r\|_{A,0}=1$ for all $r$.
\end{theorem}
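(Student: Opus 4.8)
The plan is to establish the two assertions separately: the first is essentially a repackaging of Theorem~\ref{th:quantization}, and the second is a short combinatorial argument about integer walks.

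For the decomposition $\eta = \psi + \rho$ itself, I would take $\psi := \mbox{LIF}_{\vartheta,\alpha}(\eta)$ and $\rho := \eta - \psi$, which lies in $\mathbb{S}$ since $\mathbb{S}$ is a vector space and both summands are finite weighted Dirac sums. Inspecting the \emph{reset-to-mod} rule in~(\ref{eq:reset}) (resp.~(\ref{eq:DisReset})), every output amplitude has the form $[u_i/\vartheta]\,\vartheta$ with $[u_i/\vartheta] \in \mathbb{Z}$, so the spikes of $\psi$ indeed carry integer-multiple-of-$\vartheta$ amplitudes, as required. The sub-threshold bound is then immediate: by~(\ref{eq:quantization}) (Theorem~\ref{th:quantization}) and homogeneity of the norm, $\|\rho\|_{A,\alpha} = \|\eta - \mbox{LIF}_{\vartheta,\alpha}(\eta)\|_{A,\alpha} = \|\mbox{LIF}_{\vartheta,\alpha}(\eta) - \eta\|_{A,\alpha} < \vartheta$.

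For the ``moreover'' part I would reduce to a statement about integer walks. Enumerate the spikes of $\psi$ in increasing time order as $\psi = \sum_{j=1}^{N} c_j\,\delta_{s_j}$ with $c_j = \vartheta\,m_j$, $m_j \in \mathbb{Z}$, and set $S_0 := 0$, $S_n := \sum_{j\le n} m_j$, so that $a = \|\psi\|_{A,0}/\vartheta = \max_n |S_n| \in \mathbb{N}_0$ (I normalise $\vartheta = 1$ here; the general case follows by uniformly rescaling the $\Delta\eta_r$). Then ``slice'' the walk by levels: for $r \in \{1,\dots,a\}$ put $T^{(r)}_n := \mbox{sgn}(S_n)$ when $|S_n| \ge r$ and $T^{(r)}_n := 0$ otherwise, and define $\Delta\eta_r := \sum_{j=1}^{N} \bigl(T^{(r)}_j - T^{(r)}_{j-1}\bigr)\delta_{s_j}$. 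Because $|S_n| \le a$, summing the slices at a fixed time gives $\sum_{r=1}^{a} T^{(r)}_n = S_n$, hence $\sum_{r=1}^{a} \Delta\eta_r = \sum_j (S_j - S_{j-1})\delta_{s_j} = \psi$. The partial sums of the amplitude sequence of $\Delta\eta_r$ telescope to $T^{(r)}_n - T^{(r)}_0 = T^{(r)}_n \in \{-1,0,1\}$, and $T^{(r)}_n = \pm 1$ for at least one $n$ exactly because $r \le a = \max_n |S_n|$; therefore $\|\Delta\eta_r\|_{A,0} = \max_n |T^{(r)}_n| = 1$.

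The first part carries no real difficulty, being a direct consequence of Theorem~\ref{th:quantization}. The only point in the second part that needs care is seeing that the number of non-trivial slices equals \emph{exactly} $\|\psi\|_{A,0}$, so that the sum has precisely $a$ terms, none of which is the empty spike train; this uses $a = \max_n |S_n|$ rather than a mere upper bound, together with the sign bookkeeping encoded in $\mbox{sgn}(S_n)$ so that positive and negative excursions of the walk are handled uniformly. A secondary, cosmetic issue is the threshold normalisation implicit in writing $\|\Delta\eta_r\|_{A,0} = 1$, which I would dispose of by the rescaling noted above.
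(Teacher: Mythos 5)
Your proof is correct, and for the ``moreover'' part it takes a genuinely different route from the paper's. The first half (taking $\psi:=\mbox{LIF}_{\vartheta,\alpha}(\eta)$, reading off the integer-multiple amplitudes from the \emph{reset-to-mod} rule, and getting $\|\rho\|_{A,\alpha}<\vartheta$ from the quantization inequality~(\ref{eq:quantization})) coincides with what the paper does implicitly. For the decomposition of $\psi$, however, the paper argues recursively: it identifies the top and bottom peaks of the walk $S_k=\sum_{i\le k}a_i^{(r)}$ via~(\ref{eq:topPeak})--(\ref{eq:bottomPeak}), builds a single unit spike train $\Delta\eta_{r+1}$ with amplitudes in $\{0,\pm1,\pm2\}$ placed on the up/down intervals through a two-case analysis (Cases A and B), claims that subtracting it shifts every peak one unit toward the zero line so that $\|\eta_r-\Delta\eta_{r+1}\|_{A,0}=\|\eta_r\|_{A,0}-1$, and iterates $a$ times. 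Your level-set slicing $T^{(r)}_n:=\mbox{sgn}(S_n)\,$ on $\{|S_n|\ge r\}$ produces all $a$ summands in one shot: the telescoping identity $\sum_{j\le n}\bigl(T^{(r)}_j-T^{(r)}_{j-1}\bigr)=T^{(r)}_n$ makes $\|\Delta\eta_r\|_{A,0}=1$ immediate (with equality, not just $\le 1$, precisely because $r\le a=\max_n|S_n|$), and $\sum_{r=1}^a T^{(r)}_n=S_n$ gives $\sum_r\Delta\eta_r=\psi$ with no induction and no case analysis. What your version buys is rigor and brevity --- the paper's peak-shifting step and its Case~A/B bookkeeping are the delicate parts of its argument, whereas your construction verifies both required properties by two one-line computations. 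What the paper's version buys is a greedy, locally computable procedure (Algorithm~\ref{alg:Decomposition}) and the geometric picture of peaks migrating toward zero. Your closing remark about normalising $\vartheta=1$ is apt: the paper's proof also silently assumes integer amplitudes, so the rescaling is needed in either treatment for $a:=\|\psi\|_{A,0}$ to be an integer.
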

\begin{proof}

Let $\eta_0 := \psi$ be the initial spike train with integer spike amplitudes $a_i^{(0)} \in \mathbb{Z}$.
Assume that  $\|\eta_0\|_{A,0}>1$. 
For convenience we define a sum over an empty index set to be zero, i.e.,  $\sum_{i \in \emptyset} a_i = 0$.
We will recursively define a sequence 
\begin{equation}
\label{eq:etar}
\eta_r = \sum_{i=1}^{N^{(r)}} a_i^{(r)}  \delta_{t_i},
\end{equation}
of spike trains for $r= 1, \ldots, N^{(0)}$ such that $\|\eta_{r}- \eta_{r-1}\|_{A,0}=1$ and $\eta_{N^{(0)}}=\emptyset$.
Denote $N^{(r)}:=\|\eta_r \|_{A, 0}$.

\begin{figure}
\centering
\includegraphics[width=8cm]{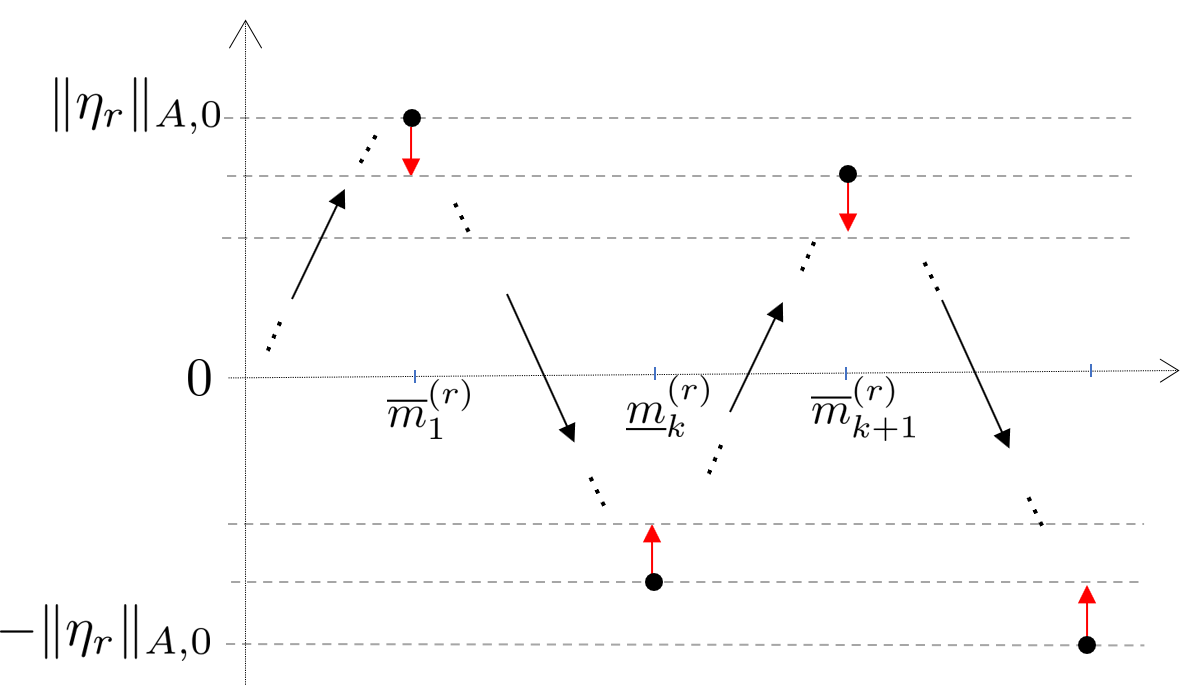}
\caption{Peaks in spike decomposition algorithm. 
	The subtraction of $\Delta\eta_r$ results in shifting the peaks towards the zero line, indicated by the red arrows. }
		\label{fig:walkA}
\end{figure}

If $N^{(r)}\geq 2$, then according to Fig.~(\ref{fig:walkA}) we consider the peaks in the walk $S_k = \sum_{i=1}^k a_i^{(r)}$.
Without loss of generality let us assume that the first peak is positive. 
For this we define recursively the corresponding top and bottom peak indexes $\overline{m}_k^{(r)}$, resp. $\underline{m}_k^{(r)}$ as follows.
\begin{eqnarray}
\label{eq:topPeak}
\overline{m}_1^{(r)} & :=&  \min\{k>0: \sum_{i=1}^k a_i^{(r)} = N^{(r)}\}, \nonumber \\
\overline{m}_{k+1}^{(r)}  & := & \min\{k> \overline{m}_{k}^{(r)}: \sum_{i=1}^k a_i^{(r)} \geq N^{(r)}-1\},
\end{eqnarray}
and, analogously, 
\begin{equation}
\label{eq:bottomPeak}
\underline{m}_{k}^{(r)}  := \min\{
m \in J=\{\overline{m}_k^{(r)}+1, \ldots, \overline{m}_{k+1}^{(r)} \}: 
\sum_{i=1}^m a_i^{(r)} = \min_{j \in J} \sum_{i=1}^j a_i^{(r)} \leq -1\}. 
\end{equation}

Based on (\ref{eq:topPeak}) and (\ref{eq:bottomPeak}) we define the spike train 
$\Delta \eta_{r+1} := \sum_i d^{(r+1)}_i \delta_{t_{i}}$ as follows.
Due to our assumption that the first peak is positive we define (otherwise $-1$)
\begin{equation}
\label{eq:a1}
d^{(r+1)}_{m_1}  :=  1. 
\end{equation}
For the subsequent peaks we consider the down and up intervals 
\begin{equation}
\label{eq:Jk}
\underline{J}_k  := \left\{\overline{m}_k^{(r)}+1, \ldots, \underline{m}_{k}^{(r)} \right\}, \, \, 
\overline{J}_k  :=  \left\{\underline{m}_k^{(r)}+1, \ldots, \overline{m}_{k+1}^{(r)} \right\}.
\end{equation}

We set $d^{(r+1)}_{i} := 0$ for all $t_i$ except the following cases. 
There are  two cases for down intervals (analogously for up intervals):
\begin{itemize}
\item Case A. $\sum_{i \in \underline{J}_k} a_i^{(r)} \leq -2$ and there is an index $i \in \underline{J}_k: a_i^{(r)}\leq -2$, 
then we set
\begin{equation}
\label{eq:d1}
d^{(r+1)}_{i}  :=  -2. 
\end{equation}
\item Case B. $\sum_{i \in \underline{J}_k} a_i^{(r)} \leq -2$ and there is no index $i \in \underline{J}_k: a_i^{(r)}\leq -2$, then there are at least two indexes $i_1, i_2$ such that $a_{i_1}^{(r)} + a_{i_2}^{(r)} \leq -2$. Thus, we set
\begin{equation}
\label{eq:d2}
d^{(r+1)}_{i_1}  := -1, \nonumber \\
 d^{(r+1)}_{i_2}  := -1. 
\end{equation}
\end{itemize}
Analogously, we define the spikes for the up intervals, i.e., again distinguishing two cases.
\begin{itemize}
\item Case A. $\sum_{i \in \overline{J}_k} a_i^{(r)} \geq 2$ and there is an index $i \in \underline{J}_k: a_i^{(r)} \geq 2$, 
then we set
\begin{equation}
\label{eq:d11}
d^{(r+1)}_{i} :=  2. 
\end{equation}
\item Case B. $\sum_{i \in \underline{J}_k} a_i^{(r)} \geq 2$ and there is no index $i \in \underline{J}_k: a_i^{(r)}\geq 2$, then there are at least two indexes $i_1, i_2$ such that $a_{i_1}^{(r)} + a_{i_2}^{(r)} \geq 2$. Thus, we set
\begin{equation}
\label{eq:d21}
d^{(r+1)}_{i_1} :=  1, \nonumber \\
 d^{(r+1)}_{i_2}  :=  1. 
\end{equation}
\end{itemize}
Not that $\|\Delta \eta_{r+1}\|_{A, 0}=1$ and $\|\eta_r - \Delta\eta_{r+1}\|_{A,0} = \|\eta_r\|_{A,0}-1$, 
since all peaks are shifted by $1$ towards the zero line.
Since in each step the $\|.\|_{A,0}$ reduced by $1$, $\|\psi\|_{A,0}$ many steps are sufficient to represent 
$\psi = \sum_r \Delta \eta_r$.
\end{proof}

\section{Additive Spike Errors and a Resonance Phenomenon}
\label{s:Additive}

\begin{wrapfigure}{t}{6.5cm}
  \includegraphics[width=6cm]{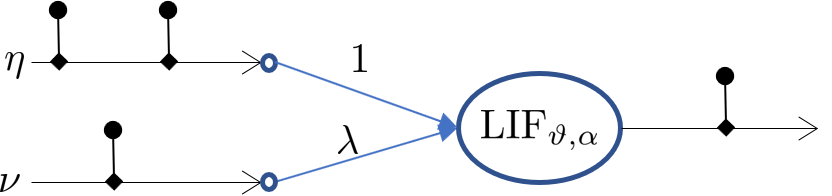}
  \caption{Scheme for additive signal error propagation through a single LIF model.}
		\label{fig:SchemeErrorPropLIF}
\end{wrapfigure}

In this section we first study the effect on the output of a single LIF neuron model 
when perturbing an input spike train $\eta$ by 
adding weighted spikes $\nu$, as illustrated in Fig.~\ref{fig:SchemeErrorPropLIF}.  

First of all we consider the special cases of zero and infinite leakage, i.e., $\alpha=0$, resp., $\alpha = \infty$, 
to obtain Lemma~\ref{lem:IFnu}. For the proof see~\ref{A:lem:IFnu}.
\begin{lemma}[Additive Error Bound for Integrate-and-Fire]
\label{lem:IFnu}
Let  $\mbox{LIF}_{\vartheta, \alpha}$ be a LIF neuron model with $\alpha \in \{0, \infty\}$ and {\it reset-to-mod} re-initialization, then:
\begin{equation}
\label{eq:IFnu} 
\forall \vartheta>0, \eta, \nu \in \mathbb{S}: \|\nu\|_{A,\alpha}\leq \vartheta \Rightarrow  
\|\mbox{LIF}_{\vartheta,\alpha}(\eta+\nu) - \mbox{LIF}_{\vartheta,\alpha}(\eta)\|_{A,\alpha} \leq \vartheta.
\end{equation}
\end{lemma}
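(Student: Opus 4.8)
The plan is to handle the two degenerate leakage values separately, since $\alpha=0$ and $\alpha=\infty$ behave quite differently, and in both cases to reduce to a bookkeeping identity for the membrane potential under \emph{reset-to-mod}. Throughout I would put $\eta$ and $\nu$ on a common ordered time grid $\tau_1<\dots<\tau_K$ (the union of their finite supports), writing $a_k,c_k$ for the amplitudes contributed at $\tau_k$ by $\eta,\nu$ (either may be $0$), and I would use that $\|\cdot\|_{A,\alpha}$ of a spike train supported on this grid is $\max_k|\text{partial sum up to }\tau_k|$ for $\alpha=0$ and $\max_k|\text{amplitude at }\tau_k|$ for $\alpha=\infty$. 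Note that the output spikes of $\mathrm{LIF}_{\vartheta,0}$ and $\mathrm{LIF}_{\vartheta,\infty}$ also live on this grid, since for these two values the membrane only changes at input events.

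For $\alpha=\infty$ the argument is short. The membrane decays instantaneously between events, so $\mathrm{LIF}_{\vartheta,\infty}$ acts on each spike separately, sending amplitude $a$ to $[a/\vartheta]\vartheta$ as in~(\ref{eq:reset[]}); hence the difference spike train has amplitude $\bigl([(a_k+c_k)/\vartheta]-[a_k/\vartheta]\bigr)\vartheta$ at $\tau_k$, and its $\|\cdot\|_{A,\infty}$-norm equals $\vartheta\max_k\bigl|[(a_k+c_k)/\vartheta]-[a_k/\vartheta]\bigr|$. Since $\|\nu\|_{A,\infty}\le\vartheta$ gives $|c_k|\le\vartheta$ for every $k$, it remains to check the elementary inequality $|[x+y]-[x]|\le 1$ for $|y|\le 1$: if $x$ and $x+y$ lie on the same side of $0$ one uses $[\cdot]=\lfloor\cdot\rfloor$ resp.\ $\lceil\cdot\rceil$, and if they lie on opposite sides then $|x|<1$ and $|x+y|<1$, so both truncations vanish. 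This yields~(\ref{eq:IFnu}) for $\alpha=\infty$.

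The case $\alpha=0$ is the substantial one. Writing $Q(v):=v-[v/\vartheta]\vartheta\in(-\vartheta,\vartheta)$, the post-$\tau_k$ membranes of the two runs are $u_k=Q(u_{k-1}+a_k)$ and $\bar u_k=Q(\bar u_{k-1}+a_k+c_k)$ with $u_0=\bar u_0=0$. Because under \emph{reset-to-mod} the discharged amount equals the emitted spike amplitude, the net-voltage balance gives the cumulative outputs as $\tilde A_k=A_k-u_k$ and $\tilde B_k=B_k-\bar u_k$, where $A_k,B_k$ are the cumulative inputs; hence the partial sums of the difference train are $e_k:=\tilde B_k-\tilde A_k=C_k+u_k-\bar u_k$ with $C_k$ the cumulative $\nu$-amplitude, so the quantity to bound is $\max_k|e_k|$, and $e_k\in\vartheta\mathbb Z$, $e_0=0$, $|C_k|\le\|\nu\|_{A,0}\le\vartheta$. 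I would prove $|e_k|\le\vartheta$ by induction. From $|C_k|\le\vartheta$ and $|u_k|,|\bar u_k|<\vartheta$ one gets $|e_k|<3\vartheta$, so only $e_k=\pm2\vartheta$ need be excluded; take $e_k=2\vartheta$. Then $C_k\le\vartheta$ forces $u_k-\bar u_k\ge\vartheta$, whence $u_k\in(0,\vartheta)$, $\bar u_k\in(-\vartheta,0)$. Since $Q(v)>0$ implies $v>0$, the pre-reset potential $v_k:=u_{k-1}+a_k$ satisfies $v_k=u_k+n_1\vartheta$ with an integer $n_1\ge0$; likewise $w_k:=\bar u_{k-1}+a_k+c_k$ satisfies $w_k=\bar u_k-n_2\vartheta$ with an integer $n_2\ge0$. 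But also $w_k-v_k=(\bar u_{k-1}-u_{k-1})+c_k=(C_{k-1}-e_{k-1})+c_k=C_k-e_{k-1}$ by the balance identity one step earlier, and equating the two expressions for $w_k-v_k$ gives $e_{k-1}=e_k+(n_1+n_2)\vartheta=(2+n_1+n_2)\vartheta\ge2\vartheta$, contradicting $|e_{k-1}|\le\vartheta$. The case $e_k=-2\vartheta$ is symmetric, which finishes the induction and~(\ref{eq:IFnu}) for $\alpha=0$.

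The main obstacle is exactly this last step. The crude bounds and even the quasi-isometry estimate of Corollary~\ref{cor:QuasiIsometry} only deliver the constant $3\vartheta$; the sharpening to $\vartheta$ rests on observing that a forbidden value $e_k=\pm2\vartheta$ can only be produced when the pre-reset potentials sit an integer number of thresholds \emph{above}, resp.\ \emph{below}, the residues with the \emph{correct} signs, which pushes the excess back into $e_{k-1}$ and violates the inductive bound. The two points I would be most careful about are the signs $n_1,n_2\ge0$ of these integer offsets (i.e.\ that $Q(v)>0\Rightarrow v>0$ and $Q(w)<0\Rightarrow w<0$), and verifying that the balance identity $\tilde A_k=A_k-u_k$ genuinely survives the \emph{reset-to-mod} (modulo) discharge and not only the single-threshold subtraction of \emph{reset-by-subtraction}.
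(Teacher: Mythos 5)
Your proof is correct, but for the substantial case $\alpha=0$ it takes a genuinely different route from the paper. The paper argues by contradiction on the spike pattern: it assumes the perturbation $\nu$ produces three consecutive extra output spikes of the same polarity, derives firing conditions of the form $p_{\eta}(s_j)-[p_{\eta}(s_j)]+j<\sum_i b_i$ at each of them, and uses $|\sum_{i=k_1}^{k_2}b_i|\le 2$ to reach the contradiction $-1<p_\eta(s_3)-[p_\eta(s_3)]\le -1$; a further (rather informal) polarity argument then sharpens the count to the bound $\vartheta$. You instead set up the exact telescoping balance $\tilde A_k=A_k-u_k$, $\tilde B_k=B_k-\bar u_k$ for the cumulative outputs under \emph{reset-to-mod}, observe that the partial sums of the difference train satisfy $e_k=C_k+u_k-\bar u_k\in\vartheta\mathbb Z$ with $|e_k|<3\vartheta$, and exclude $e_k=\pm2\vartheta$ by induction using the sign structure of the modulo residues ($Q(v)>0\Rightarrow v>0$, so the integer offsets $n_1,n_2$ are nonnegative and the excess is pushed back into $e_{k-1}$). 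Your argument is more algebraic and self-contained: it avoids the case analysis over consecutive same-polarity events, makes the inductive invariant $|e_k|\le\vartheta$ explicit, and isolates precisely where \emph{reset-to-mod} (as opposed to \emph{reset-by-subtraction}) enters, namely in the identity $u_k=Q(u_{k-1}+a_k)$ and in the two sign facts about $Q$. The paper's version, in exchange, gives a more direct behavioural picture (how many extra spikes of a given polarity the perturbation can inject). Your $\alpha=\infty$ case coincides with the paper's reduction to $|[x+y]-[x]|\le1$ for $|y|\le1$. One point worth stating explicitly in a final write-up is the observation you use implicitly, that for $\alpha\in\{0,\infty\}$ the outputs of both runs are supported on the common event grid, so the norm of the difference train really is $\max_k|e_k|$ (resp.\ $\max_k$ of the amplitude differences).
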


Based on Theorem~\ref{th:quantization} on characterizing LIF as signal-to-spike-train quantization and taking into account the special cases 
of $\alpha \in \{0, \infty\}$ of Lemma~\ref{lem:IFnu} we obtain a Lipschitz-style upper bound in terms of inequality~(\ref{eq:LIFANInequality}) for a
{\it reset-to-mod} LIF neuron, resp. in terms of (\ref{eq:SNNBound}) for SNNs based on {\it reset-to-mod} LIF neurons.
\begin{theorem} [Liptschitz-Style Upper Bound for the LIF model]
\label{th:LIFDNInequality}
For a {\it reset-to-mod} LIF neuron model with $\vartheta>0$ and $\alpha\in [0,\infty]$ and for all spike trains 
$\nu \in \mathbb{S}$ there holds the following inequality
\begin{equation}
\label{eq:LIFANInequality}
\sup_{\eta \in \mathbb{S}}
\left\|\mbox{LIF}_{\vartheta, \alpha}(\eta + \nu) - \mbox{LIF}_{\vartheta, \alpha}(\eta)\right\|_{A, \alpha} 
\leq   \gamma(\alpha) 
\left\lceil
\frac{1}{\vartheta}\|\nu\|_{A, \alpha} 
\right\rceil \vartheta,
\end{equation}
where $\gamma(0)=\gamma(\infty)=1$ and $\gamma(\alpha) \in [2,3]$ for $\alpha \in (0, \infty)$.
\end{theorem}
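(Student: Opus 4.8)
The plan is to leverage Theorem~\ref{th:quantization} (LIF as a $\|.\|_{A,\alpha}$-quantizer) together with the triangle inequality to transfer the estimate from the spike trains themselves to the quantization residua, and then to handle the coefficient $\gamma(\alpha)$ by separately treating the boundary cases $\alpha\in\{0,\infty\}$ and the interior case $\alpha\in(0,\infty)$. First I would write, for fixed $\eta,\nu\in\mathbb{S}$,
\begin{equation}
\mbox{LIF}_{\vartheta,\alpha}(\eta+\nu)-\mbox{LIF}_{\vartheta,\alpha}(\eta)
= \bigl((\eta+\nu)-\rho_1\bigr) - \bigl(\eta-\rho_2\bigr)
= \nu - \rho_1 + \rho_2,\nonumber
\end{equation}
where $\rho_1,\rho_2$ are the respective sub-threshold residua from Theorem~\ref{th:quantization}, so $\|\rho_1\|_{A,\alpha}<\vartheta$ and $\|\rho_2\|_{A,\alpha}<\vartheta$. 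By the triangle inequality this already gives a bound $\|\nu\|_{A,\alpha}+2\vartheta$, which is the quasi-isometry estimate of Corollary~\ref{cor:QuasiIsometry} but is \emph{weaker} than what is claimed when $\|\nu\|_{A,\alpha}$ is large; the point of the theorem is the multiplicative form $\lceil \|\nu\|_{A,\alpha}/\vartheta\rceil\vartheta$. So the crude triangle-inequality step only settles the regime $\|\nu\|_{A,\alpha}\le\vartheta$ (where the ceiling equals $1$ and $\gamma(\alpha)\le 3$ covers the $2\vartheta+\|\nu\|\le 3\vartheta$ bound), and for $\alpha\in\{0,\infty\}$ even that regime is improved to the sharp $\vartheta$ by Lemma~\ref{lem:IFnu}.

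The substantive step is the general $\nu$ case, and the natural device is to decompose $\nu$ into pieces each of Alexiewicz norm at most $\vartheta$ and iterate. Using Theorem~\ref{cor:ADecomposition} (or rather its scaled analogue: quantize $\nu$ at threshold $\vartheta$, write $\nu=\psi_\nu+\rho_\nu$ with $\|\rho_\nu\|_{A,\alpha}<\vartheta$ and $\psi_\nu$ having amplitudes that are integer multiples of $\vartheta$, then split $\psi_\nu$ into $\lceil\|\nu\|_{A,\alpha}/\vartheta\rceil$ many spike trains of $\|.\|_{A,\alpha}$-norm $\le\vartheta$), we write $\nu=\sum_{r=1}^{m}\nu_r$ with $m=\lceil\|\nu\|_{A,\alpha}/\vartheta\rceil$ and $\|\nu_r\|_{A,\alpha}\le\vartheta$. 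Then I would add the perturbations one at a time, $\eta\to\eta+\nu_1\to\eta+\nu_1+\nu_2\to\cdots$, and apply a one-step bound of the form $\|\mbox{LIF}_{\vartheta,\alpha}(\zeta+\nu_r)-\mbox{LIF}_{\vartheta,\alpha}(\zeta)\|_{A,\alpha}\le\gamma(\alpha)\vartheta$ for a single $\nu_r$ with $\|\nu_r\|_{A,\alpha}\le\vartheta$, summing the $m$ contributions via the triangle inequality to obtain $\gamma(\alpha)\,m\,\vartheta$. For $\alpha\in\{0,\infty\}$ the one-step bound is exactly Lemma~\ref{lem:IFnu} with constant $1$, giving $\gamma(0)=\gamma(\infty)=1$. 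For $\alpha\in(0,\infty)$ I would prove the one-step bound directly from the quantization residuum identity $\mbox{LIF}(\zeta+\nu_r)-\mbox{LIF}(\zeta)=\nu_r-\rho_1'+\rho_2'$, where $\|\rho_1'\|_{A,\alpha},\|\rho_2'\|_{A,\alpha}<\vartheta$ and $\|\nu_r\|_{A,\alpha}\le\vartheta$, so the triangle inequality gives $<3\vartheta$; the lower constant $2$ in the claimed range $[2,3]$ reflects that the two residua cannot simultaneously be extremal and opposed to $\nu_r$, which one argues by a more careful look at the sign structure of the running sums $\sum_{i\le n}a_i e^{-\alpha(t_n-t_i)}$ for the combined input versus the unperturbed input.

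\textbf{Main obstacle.} The delicate part is making the iteration genuinely additive, i.e. ensuring that after adding $\nu_1$ the intermediate signal $\eta+\nu_1$ is again an admissible input whose LIF output is controlled, and that the errors from successive steps do not compound worse than linearly — equivalently, that the "carry" coming from the sub-threshold residuum of one stage does not inflate the norm fed into the next stage. This is exactly where the resonance-type behaviour alluded to in the section title enters: the residua $\rho_r'$ can align across stages, and it is the presence of leakage $\alpha>0$ that both creates the $\gamma(\alpha)\ge 2$ loss and, paradoxically, keeps it bounded by $3$ rather than growing with $m$. I expect the cleanest route is not a literal $m$-fold iteration but a single application of the quantization identity to the \emph{whole} perturbation together with the decomposition $\nu=\sum_r\nu_r$ substituted inside one occurrence of $\|\cdot\|_{A,\alpha}$, using $\|\mbox{LIF}_{\vartheta,\alpha}(\eta+\nu)-(\eta+\nu)\|_{A,\alpha}<\vartheta$ and $\|\mbox{LIF}_{\vartheta,\alpha}(\eta)-\eta\|_{A,\alpha}<\vartheta$ to absorb two of the error units and the structure of $\psi_\nu$ (integer-multiple-of-$\vartheta$ amplitudes, hence $\|\psi_\nu\|_{A,\alpha}\le m\vartheta$ by Theorem~\ref{cor:ADecomposition}) to supply the remaining $m\vartheta$; the $\alpha\in\{0,\infty\}$ sharpening to $\gamma=1$ then comes from Lemma~\ref{lem:IFnu} showing that in those cases the two residuum units can be reclaimed. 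The SNN bound~(\ref{eq:SNNBound}) would follow by composing this single-neuron estimate layer by layer along the DAG, with the edge weights $w^{(k+1)}_{i_{k+1},i_k}$ rescaling $\|\cdot\|_{A,\alpha}$ multiplicatively at each edge.
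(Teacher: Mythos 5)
Your overall architecture matches the paper's: the quantization identity $\mbox{LIF}(\eta+\nu)-\mbox{LIF}(\eta)=\nu-\rho_1+\rho_2$ with $\|\rho_i\|_{A,\alpha}<\vartheta$ supplies the finite upper bound, and the decomposition of $\nu$ into unit-norm pieces plus a telescoping sum controlled by Lemma~\ref{lem:IFnu} supplies $\gamma(0)=\gamma(\infty)\le 1$. Two corrections on the upper-bound side. First, your claim that the crude triangle-inequality bound $\|\nu\|_{A,\alpha}+2\vartheta$ is ``weaker than what is claimed when $\|\nu\|_{A,\alpha}$ is large'' is backwards: since $\|\nu\|_{A,\alpha}\le\lceil\|\nu\|_{A,\alpha}/\vartheta\rceil\vartheta$ and $2\vartheta\le 2\lceil\|\nu\|_{A,\alpha}/\vartheta\rceil\vartheta$, one gets $\|\nu\|_{A,\alpha}+2\vartheta\le 3\lceil\|\nu\|_{A,\alpha}/\vartheta\rceil\vartheta$ for \emph{every} $\nu$, and this single application of the triangle inequality is exactly how the paper obtains $\gamma(\alpha)\le 3$ for all $\alpha$; no iteration is needed in the interior case. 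Second, your iterated one-step scheme for general $\alpha\in(0,\infty)$ quietly assumes a decomposition of $\nu$ into pieces of $\|.\|_{A,\alpha}$-norm at most $\vartheta$, but Theorem~\ref{cor:ADecomposition} is stated and proved only for $\|.\|_{A,0}$; the paper accordingly invokes it only in the boundary cases $\alpha\in\{0,\infty\}$.

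The genuine gap is the lower bound $\gamma(\alpha)\ge 2$ for $\alpha\in(0,\infty)$, which is part of the theorem's assertion ``$\gamma(\alpha)\in[2,3]$'' and which your proposal never establishes. You read the ``$2$'' as a potential sharpening of the upper bound (``the two residua cannot simultaneously be extremal and opposed to $\nu_r$''), but it is the opposite: one must exhibit inputs witnessing that the constant cannot be taken below $2$ when there is nonzero finite leakage. The paper does this with an explicit three-spike construction (Example~\ref{ex:1}, $\eta$ with amplitudes $(-\tfrac32,1,\tfrac32)$ and $\nu$ with amplitudes $(1,-1,1)$ at times $k\varepsilon$, so $\|\nu\|_{A,\alpha}=1$), computes both outputs in closed form, and finds $\|\mbox{LIF}_{1,\alpha}(\eta+\nu)-\mbox{LIF}_{1,\alpha}(\eta)\|_{A,\alpha}=1+e^{-2\varepsilon\alpha}\to 2$ as $\varepsilon\to 0$. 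The same example evaluated at $\alpha=0$ and $\alpha=\infty$ yields difference norm exactly $1$, which is also needed to upgrade $\gamma(0)\le1$, $\gamma(\infty)\le1$ to the claimed equalities. Without some such family of witnesses, the statement about the range of $\gamma$ --- and in particular the qualitative jump of the best constant from $1$ at $\alpha\in\{0,\infty\}$ to at least $2$ for any finite positive leakage, which is the ``resonance'' phenomenon the section is named after --- is not proved. A complete write-up should also record the scaling argument showing the optimal constant is independent of $\vartheta$, so that one may normalize to $\vartheta=1$ throughout.
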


\begin{proof}
First of all note that 
\begin{equation}
\label{eq:gammaConst}
\xi(\vartheta, \alpha):=\sup_{\eta,\nu \in \mathbb{S}} 
\frac{
\left\|\mbox{LIF}_{\vartheta, \alpha}(\eta + \nu) - \mbox{LIF}_{\vartheta, \alpha}(\eta)\right\|_{A, \alpha}}
{
\left\lceil
\|\frac{1}{\vartheta}\nu\|_{A, \alpha} 
\right\rceil\vartheta
}
\end{equation}
is independent from $\vartheta$
although $\vartheta$ appears in (\ref{eq:gammaConst}), as shown in the following. 
Indeed, for given threshold $\vartheta>0$ let
 $\eta_i^{(\vartheta)}$ and $\nu_i^{(\vartheta)}$ be sequences for which the fraction in~(\ref{eq:gammaConst})
converges to $\xi(\vartheta, \alpha)$, then 
$\widetilde{\eta_i} := \eta_i^{(\vartheta)}/\vartheta$ and 
$\widetilde{\nu_i} := \nu_i^{(\vartheta)}/\vartheta$ yield
\begin{equation}
\label{eq:gammaConst1}
\xi(\vartheta, \alpha) = 
\sup_i
\frac{
\vartheta \left\|\mbox{LIF}_{1, \alpha}(\widetilde{\eta_i} +\widetilde{\nu_i}) - \mbox{LIF}_{1, \alpha}(\widetilde{\eta_i})\right\|_{A, \alpha}}
{
\left\lceil
\|\frac{1}{\vartheta}\nu_i^{(\vartheta)}\|_{A, \alpha} 
\right\rceil\vartheta
} = \xi(1, \alpha).
\end{equation}

Now, define and use~(\ref{eq:quantization})
\begin{eqnarray}
\label{eq:gamma}
\gamma(\alpha) & := & \xi(1, \alpha) \nonumber \\
& = & 
\sup_{\eta, \nu \in \mathbb{S}} 
\frac{
\left\|\mbox{LIF}_{1, \alpha}(\eta + \nu) - (\eta + \nu) +  \nu +\eta-\mbox{LIF}_{1, \alpha}(\eta)\right\|_{A, \alpha}}
{
\left\lceil
\|\nu\|_{A, \alpha} 
\right\rceil
} \nonumber\\
& \leq  & 
\sup_{\eta, \nu \in \mathbb{S}} 
\frac{
2 + \|\nu\|_{A, \alpha}}
{
\left\lceil
\|\nu\|_{A, \alpha} 
\right\rceil
} \leq 3 < \infty.
\end{eqnarray}
Now, consider $\alpha \in (0, \infty)$ and the following example.
\begin{example}
\label{ex:1}
Let $\eta = \sum_{k=1}^3 a_k \delta_{t_k}$ for $t_k = k \varepsilon$, $\varepsilon>0$ 
and $(a_1,a_2,a_3)= (-\frac{3}{2}, 1, \frac{3}{2})$, 
and $\nu = \sum_{k=1}^3 b_k \delta_{t_k}$ with $(b_1,b_2,b_3)= (1, -1, 1)$ satisfying $\|\nu\|_{A, \alpha}=1$ for all $\alpha \in [0,\infty]$.
\end{example}
For $\alpha \in (0,\infty)$  we obtain for this example
\begin{eqnarray}
\mbox{LIF}_{1, \alpha}(\eta) & = & -1 \,\delta_{t_1} + 0 \,\delta_{t_2} + 1 \,\delta_{t_3}, \nonumber \\
\mbox{LIF}_{1, \alpha}(\eta + \nu) & = & 0 \,\delta_{t_1} + 0 \,\delta_{t_2} + 2 \,\delta_{t_3}.
\end{eqnarray}
Therefore we get 
\begin{eqnarray}
\label{eq:gammalowerbound}
\rho(\varepsilon, \alpha):= \|\mbox{LIF}_{1, \alpha}(\eta + \nu) -\mbox{LIF}_{1, \alpha}(\eta )\|_{A, \alpha} & = & 
\|1\, \delta_{t_1} + 0\, \delta_{t_2} + 1\, \delta_{t_3}\|_{A, \alpha}  \nonumber \\
& = &  \left|1+ e^{-2 \varepsilon \alpha}\right|.
\end{eqnarray}
From $\lim_{\vartheta \rightarrow 0} \rho(\varepsilon, \alpha) = 2$ for all $\alpha \in (0, \infty)$ 
we follow that $\gamma(\alpha)\geq 2$ for all $\alpha \in (0, \infty)$. 

Now, let us check the special case $\alpha = 0$. Without loss of generality we may assume that $\vartheta = 1$.
For this case we apply the spike train decomposition of Corollary~\ref{cor:ADecomposition}, which allows us to represent $\nu = \sum_{k=1}^a \nu_k + \widetilde{\nu}$,
where 
$a:= \left\lfloor \|\nu\|_{A,0} \right\rfloor$, 
$\|\nu_k\|_{A,0}=1$ and $\|\widetilde{\nu}\|_{A,0} < 1$.
Then, taking into account Lemma~\ref{lem:IFnu} and applying (\ref{eq:quantization}) on the telescope sum 
\begin{eqnarray}
\label{eq:tele}
 & & \|\mbox{LIF}_{1, 0}(\eta + \nu) -  \mbox{LIF}_{1, 0}(\eta )\|_{A,0}\nonumber  \\
& = & 
\|\, 
\mbox{LIF}_{1, 0}(\eta + \nu_1 + \ldots +\nu_a + \widetilde{\nu})  - \mbox{LIF}_{1, 0}(\eta + \nu_2 + \ldots +\nu_a + \widetilde{\nu}) \nonumber  \\
& &  
+\mbox{LIF}_{1, 0}(\eta + \nu_2 + \ldots +\nu_a + \widetilde{\nu}) - \mbox{LIF}_{1, 0}(\eta + \nu_3 + \ldots +\nu_a + \widetilde{\nu}) \nonumber  \\
& & \, \,\,\,\ldots \nonumber\\
& & +
\mbox{LIF}_{1, 0}(\eta + \nu_a + \widetilde{\nu})  - \mbox{LIF}_{1, 0}(\eta + \widetilde{\nu}) \nonumber  \\
& & +
\mbox{LIF}_{1, 0}(\eta + \widetilde{\nu}) - \mbox{LIF}_{1, 0}(\eta ) \,\|_{A,0} \nonumber \\
& \leq &  \left\lceil \|\nu\|_{A,0} \right\rceil
\end{eqnarray} 
we obtain $\gamma(0) \leq 1$.
Since Example~\ref{ex:1} gives
\begin{eqnarray}
\mbox{LIF}_{1, 0}(\eta) & = & -1 \,\delta_{t_1} + 0 \,\delta_{t_2} + 2 \,\delta_{t_3}, \nonumber \\
\mbox{LIF}_{1, 0}(\eta + \nu) & = & 0 \,\delta_{t_1} + 0 \,\delta_{t_2} + 2 \,\delta_{t_3},
\end{eqnarray}
hence, $\|\mbox{LIF}_{1, 0}(\eta + \nu) -\mbox{LIF}_{1, 0}(\eta)\|_{A, 0} = 1$, we finally get
$\gamma(0)=1$.
The same way of reasoning on the telescope sum applies to the case $\alpha = \infty$, giving $\gamma(\infty)\leq 1$.
Checking again Example~\ref{ex:1}, we get
\begin{eqnarray}
\mbox{LIF}_{1, \infty}(\eta) & = & -1 \,\delta_{t_1} + 1 \,\delta_{t_2} + 1 \,\delta_{t_3}, \nonumber \\
\mbox{LIF}_{1, \infty}(\eta + \nu) & = & 0 \,\delta_{t_1} + 0 \,\delta_{t_2} + 2 \,\delta_{t_3},
\end{eqnarray}
hence, $\|\mbox{LIF}_{1, \infty}(\eta + \nu) -\mbox{LIF}_{1, \infty}(\eta)\|_{A, 0} = \|1 \,\delta_{t_1} -1 \,\delta_{t_2} + 1 \,\delta_{t_3}\|_{A,0} = 1$, showing that
$\gamma(\infty)=1$.
\end{proof}

Theorem~\ref{th:LIFDNInequality} together with the triangle inequality of the norm $\|.\|_{A,\alpha}$  immediately yields 
a global upper bound on the norm difference of $\mbox{LIF}(\eta)$ and its perturbed version $\mbox{LIF}(\eta + \nu)$ for SNNs.
\begin{theorem} [Global Lipschitz-Style Bound for SNNs]
\label{th:SNNDNInequality}
Let the spiking neural network $\mbox{SNN}: \mathbb{S}^{N_0} \rightarrow \mathbb{S}^{(N_L)}$ with {\it reset-to-mod} LIF neurons $\mbox{LIF}_{\vartheta,\alpha}$, 
$\vartheta = 1$, be given by 
$[W^{(1)}, \ldots, W^{(N_L)}]$ according to~(\ref{eq:SNNW}), and let $(\nu_1, \ldots,\nu_{N_0})$ be additive error spike trains 
in the corresponding input spike trains $(\eta_1, \ldots,\eta_{N_0})$ , then for all output channels $\eta_j^{(N_L)}$, $j \in \{1, \ldots, N_L\}$, we obtain the following error bound 
\begin{eqnarray}
\label{eq:SNNBound}
& &  
\sup_{\eta_i}
\left\|
\mbox{SNN}\left((\eta_i + \nu_i)_i\right)-
\mbox{SNN}\left((\eta_i)_i \right)
\right\|_{A, \alpha} \nonumber\\
 & \leq_j &
\Gamma_{\alpha}\circ \widetilde{W}^{(N_L)} \left(\Gamma_{\alpha} \circ \widetilde{W}^{(N_L-1)} \cdots 
\left(\Gamma_{\alpha} \circ \widetilde{W}^{(1)}  \left(\Gamma_{\alpha} \circ  \nu_{A,\alpha}\right)\right)\right), 
\end{eqnarray}
where 
$\|(c_{i,j})_{i,j}\|_{A, \alpha}:= ((\|(c_{i,j})_{i,j} \|_{A, \alpha})_i)$, 
$\leq_j$ refers to the $j$-th output channel on the left and the right hand side of the inequality, 
$\widetilde{W}^{(k+1)}:=\left(\left|w^{(k+1)}_{i_{k+1}, i_k}\right|\right)_{i_{k+1}, i_k}$, 
$\nu_{A,\alpha}  :=  \left(\left\|\nu_1\right\|_{A, \alpha}, \ldots,\left\|\nu_{N_0}\right\|_{A, \alpha}\right), 
\Gamma_{\alpha}(x) :=  \left\lceil \gamma(\alpha) \, x\right\rceil$
and the rounding-up function $ \left\lceil .\right\rceil$  is applied coordinate-wise.
\end{theorem}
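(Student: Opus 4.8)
The plan is to prove~(\ref{eq:SNNBound}) by induction over the layers $k=0,1,\ldots,L$ of the acyclic graph, carrying along at each stage a componentwise upper bound $V^{(k)}$ for the vector whose $i_k$-th entry is the Alexiewicz norm of the perturbation of the output spike train of neuron $(k,i_k)$. Set $V^{(0)}:=\Gamma_\alpha\circ\nu_{A,\alpha}$ and $V^{(k+1)}:=\Gamma_\alpha\circ\widetilde{W}^{(k+1)}(V^{(k)})$; reading off the $j$-th coordinate of $V^{(L)}$ then yields the asserted bound for output channel $j$. Two elementary observations drive the argument: first, $\widetilde{W}^{(k+1)}$ has non-negative entries and $\Gamma_\alpha$ is non-decreasing, so $\Gamma_\alpha\circ\widetilde{W}^{(k+1)}$ is monotone on the non-negative orthant --- this is exactly what allows an upper bound for the layer-$k$ perturbations to be pushed forward to one for layer $k+1$; second, $\Gamma_\alpha(x)\ge x$ for $x\ge0$ (since $\gamma(\alpha)\ge1$), so the innermost $\Gamma_\alpha\circ\nu_{A,\alpha}$ is merely a harmless over-estimate of the true input perturbation norms $\|\nu_i\|_{A,\alpha}$, which settles the base case $k=0$.

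For the inductive step I would separate the passage from layer $k$ to layer $k+1$ into a synaptic and a somatic part. The perturbation arriving at neuron $(k+1,i_{k+1})$ is the weighted sum $\mu_{i_{k+1}}=\sum_{i_k}w^{(k+1)}_{i_{k+1},i_k}\,\rho_{i_k}$, where $\rho_{i_k}$ denotes the output perturbation of neuron $(k,i_k)$; the triangle inequality together with absolute homogeneity of $\|\cdot\|_{A,\alpha}$ gives $\|\mu_{i_{k+1}}\|_{A,\alpha}\le\sum_{i_k}|w^{(k+1)}_{i_{k+1},i_k}|\,\|\rho_{i_k}\|_{A,\alpha}\le(\widetilde{W}^{(k+1)}V^{(k)})_{i_{k+1}}$ by the induction hypothesis. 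For the somatic part, neuron $(k+1,i_{k+1})$ outputs $\mbox{LIF}_{1,\alpha}(\zeta)$ without and $\mbox{LIF}_{1,\alpha}(\zeta+\mu_{i_{k+1}})$ with the perturbation, for the appropriate aggregated input $\zeta$, so Theorem~\ref{th:LIFDNInequality} with $\vartheta=1$ bounds the output perturbation norm by $\gamma(\alpha)\lceil\|\mu_{i_{k+1}}\|_{A,\alpha}\rceil$; invoking monotonicity of $x\mapsto\gamma(\alpha)\lceil x\rceil$ and the bound just obtained on $\|\mu_{i_{k+1}}\|_{A,\alpha}$, this is at most the $i_{k+1}$-th coordinate of $\Gamma_\alpha(\widetilde{W}^{(k+1)}V^{(k)})=V^{(k+1)}$. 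Unwinding the recursion over $k=0,\ldots,L-1$ reproduces the nested composition on the right-hand side of~(\ref{eq:SNNBound}), and the announced endpoint values follow from $\gamma(0)=\gamma(\infty)=1$; the closing remark in the statement is then just the observation that this componentwise estimate is precisely a bound on each output channel.

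The step I expect to be the real sticking point is the bookkeeping that reconciles the per-neuron estimate $\gamma(\alpha)\lceil\cdot\rceil$ supplied by Theorem~\ref{th:LIFDNInequality} with the per-layer map $\Gamma_\alpha(\cdot)=\lceil\gamma(\alpha)\,\cdot\,\rceil$ used in the statement: since the norm inside the ceiling is only an upper bound and the aggregation matrices carry arbitrary positive weights, one must round at the right place and apply monotonicity at each level of nesting so that the chain of inequalities composes without loss. Everything else --- the reduction to $\vartheta=1$, the triangle and homogeneity manipulations, and the final unfolding of the nested expression --- is routine. The resonance-type behaviour referred to in the section heading is not part of this theorem; it is exhibited separately by Example~\ref{ex:1}, which shows that the factor $2$ inside $\gamma(\alpha)$ is attained in the limit for every $\alpha\in(0,\infty)$.
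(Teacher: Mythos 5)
Your overall architecture --- layer-wise induction, triangle inequality plus absolute homogeneity of $\|\cdot\|_{A,\alpha}$ for the synaptic aggregation, Theorem~\ref{th:LIFDNInequality} applied at each neuron, and monotonicity of the nonnegative maps to compose the per-layer bounds --- is exactly what the paper intends; its entire ``proof'' is the one-sentence remark preceding the statement that Theorem~\ref{th:LIFDNInequality} together with the triangle inequality immediately yields the bound, so your write-up is if anything more explicit than the source. The reduction to a fixed perturbation at each interior neuron, the use of the induction hypothesis via monotonicity of $x\mapsto\gamma(\alpha)\lceil x\rceil$ on the nonnegative orthant, and the reading of the innermost $\Gamma_\alpha\circ\nu_{A,\alpha}$ as a harmless over-estimate are all sound.

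However, the step you yourself flag as the sticking point is a genuine gap, and it cannot be closed by ``rounding at the right place'': Theorem~\ref{th:LIFDNInequality} with $\vartheta=1$ delivers the per-neuron bound $\gamma(\alpha)\,\lceil y\rceil$ with $y=\|\mu_{i_{k+1}}\|_{A,\alpha}$, whereas you need this to be dominated by $\Gamma_\alpha(y)=\lceil\gamma(\alpha)\,y\rceil$. The inequality $\gamma(\alpha)\lceil y\rceil\le\lceil\gamma(\alpha)y\rceil$ is false in general for $\gamma(\alpha)>1$: take $\gamma(\alpha)=2$ and $y=1/2$, giving $2\cdot 1=2$ on the left but $\lceil 1\rceil=1$ on the right. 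Since $\gamma(\alpha)\in[2,3]$ for every $\alpha\in(0,\infty)$ and the argument fed to $\Gamma_\alpha$ at each level is a generic nonnegative real (a weighted sum of previous bounds), your chain of inequalities does not compose in that regime. What your induction actually establishes is the nested bound with $\Gamma_\alpha$ replaced by $x\mapsto\gamma(\alpha)\lceil x\rceil$; for $\alpha\in\{0,\infty\}$, where $\gamma(\alpha)=1$, the two maps coincide and your proof is complete. For $\alpha\in(0,\infty)$ you should either prove the theorem with that corrected map or state explicitly that the bound as printed, with $\Gamma_\alpha(x)=\lceil\gamma(\alpha)x\rceil$, does not follow from Theorem~\ref{th:LIFDNInequality} and the triangle inequality alone --- the defect lies in the theorem's definition of $\Gamma_\alpha$, but a proof of the statement as written cannot simply defer it to bookkeeping.
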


\section{Evaluation}
\label{s:Evaluation}
In this section we look at numerical examples to demonstrate the  main theoretical results of our paper, that is above all (\ref{eq:quantization})
on spike train quantization and its consequences in terms of 
quasi isometry, Corollary~\ref{cor:QuasiIsometry}, the error bounds w.r.t time delay, Corollary~\ref{cor:BoundonLag} 
and the global Lipschitz-stlye upper bound for additive spike trains due to Theorem~\ref{th:LIFDNInequality} for the LIF model, resp. Theorem~\ref{th:SNNDNInequality} for LIF-based feedforward SNNs. 
See~\url{https://github.com/LunglmayrMoser/AlexSNN} for Python and Mathematica code. 

All the theoretical findings of this paper are based on the choice of {\it reset-to-mod} as re-initialization mode. Therefore, in the subsequent evaluations we also take the other reset modes into account to get an overview about 
the differences in the behavior. 
It is also instructive to look at the effect of alternative distance measures that are not equivalent to the Alexiewicz norm. We restrict this comparison to the Euclidean based norm~(\ref{eq:L2norm}). 
Other metrics for spike trains such as ~\cite{Satuvuori2018,Sihn2019,VICTOR2005585} are not considered because they  
are motivated for other purposes than considered in our approach which aims at characterizing that topology of the vector space $\mathbb{S}$ which meets the Postulates 1 and 2 of Section~\ref{s:Postulates}. Therefore a detailed discussion on potential implications of the Alexiewicz topology (and its leaky variants)  in the context of other proposed metrics is postponed for future study. 

\subsection{Spike Train Quantization due to~(\ref{eq:quantization})}
\label{sss:Quantization}
Fig.~\ref{fig:QuantResetVariants} displays the quantization error in the Alexiewicz topology, i.e., 
$\|\mbox{LIF}_{\vartheta, \alpha}(\eta)-\eta\|_{A, \alpha}$-norm for the different reset variants: 
(a) {\it reset-to-zero}, (b) {\it reset-by-subtraction}, and, ours, (c) {\it reset-to-mod}.
For large leaky parameter $\alpha$ all three variants tend to same error behavior. 
As expected according to Theorem~\ref{th:quantization} only in the {\it reset-to-mod} variant we can guarantee the bound of of Theorem~\ref{th:quantization}.
In contrast, Fig.~\ref{fig:QuantResetVariantsL2} illustrates the effect of choosing the Euclidean topology as commonly used in the context of SNNs, i.e., 
\begin{equation}
\label{eq:L2norm}
\left\|\sum_{i=1}^N a_i \delta_{t_i}\right\|_{2, \alpha} := \sqrt{\sum_{k=1}^N \left(\sum_{i=1}^k a_i e^{-\alpha (t_k-t_i)}\right)^2}.
\end{equation}
In contrast to the Alexiewicz topology and the {\it reset-to-mod} re-initialization of the LIF neuron 
there is no guarantee in the Euclidean metric to have a global upper bound for the quantization error, see 
Fig.~(\ref{fig:L2vsA}).
\begin{figure}
\centering
\includegraphics[height=3cm]{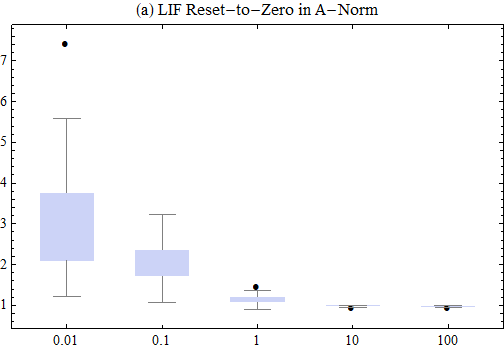}
\includegraphics[height=3cm]{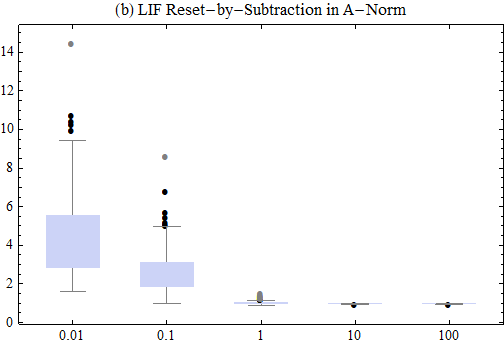}
\includegraphics[height=3cm]{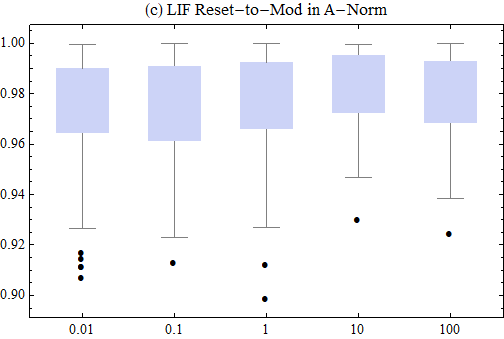}
  \caption{Distribution of the quantization error $\|\mbox{LIF}_{\vartheta, \alpha}(\eta)-\eta\|_{A, \alpha}$ for $\vartheta=1$, $\alpha \in \{0.01, 0.1, 1, 10, 100\}$ and the reset variants: (a) {\it reset-to-zero}, (b) {\it reset-by-subtraction}, and, (c) {\it reset-to-mod} (ours); 
	the spike trains with $50$ spikes (at equidistant grid) are generated by uniformly distributed spike amplitudes in the range of $[-2,2]$; for each variant $100$ runs are performed.}
		\label{fig:QuantResetVariants}
\end{figure}
\begin{figure}
\centering
\includegraphics[height=3cm]{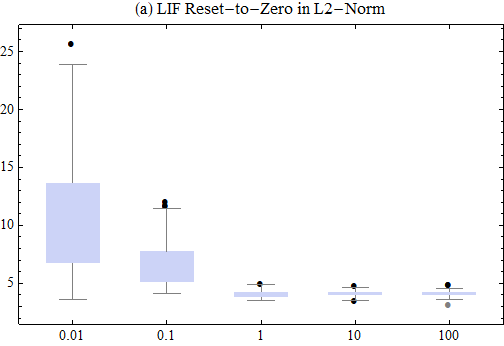}
\includegraphics[height=3cm]{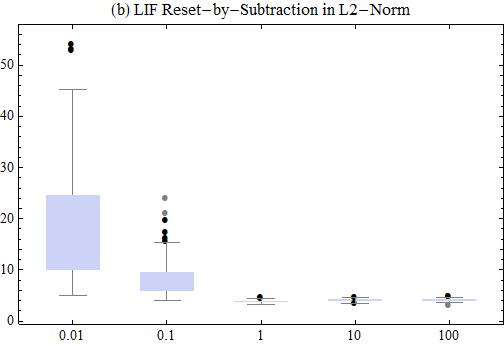}
\includegraphics[height=3cm]{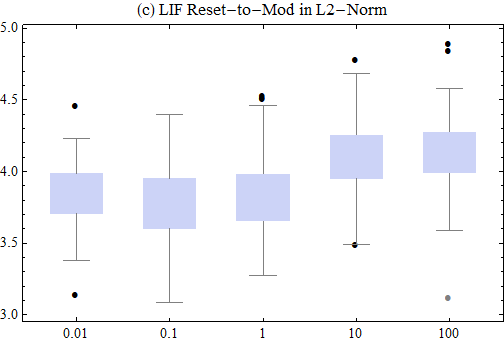}
  \caption{Like in Fig.~\ref{fig:QuantResetVariants} with $L_2$-based norm~(\ref{eq:L2norm}).}
		\label{fig:QuantResetVariantsL2}
\end{figure}
\begin{figure}
\centering
\includegraphics[height=3cm]{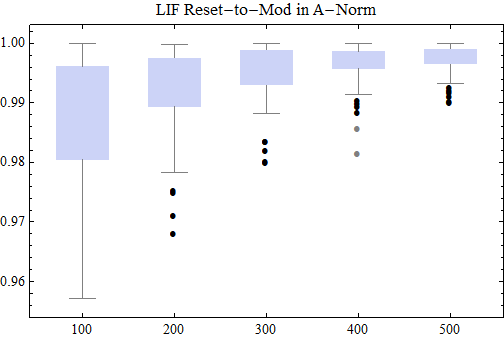}
\includegraphics[height=3cm]{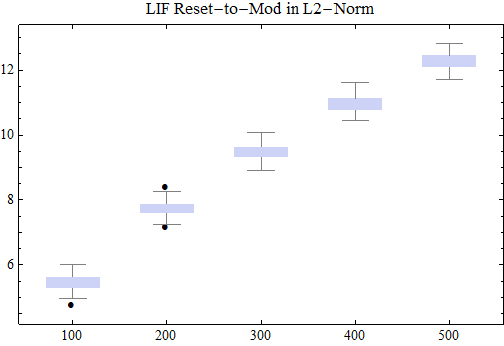}
  \caption{Like Fig.~\ref{fig:QuantResetVariants} and~\ref{fig:QuantResetVariantsL2} for $\alpha=1$ but different numbers of spikes, 
	$N \in \{100, \ldots, 500\}$. While the quantization error in the Euclidean norm~(\ref{eq:L2norm}) increases with $N$ (right), due to Theorem~\ref{th:quantization} it remains strictly upper bounded by the threshold in the Alexiewicz topology (left).
	Note the concentration of measure effect in the Alexiewicz topology, 
 see~\cite{Vershynin2018}.
	}
		\label{fig:L2vsA}
\end{figure}

\subsection{Error Bounds regarding Postulates 1 and 2 and Quasi Isometry}
\label{sss:Quasi}
Postulates 1 and 2 are covered by the inequalities~(\ref{eq:lag1}), resp. (\ref{eq:BoundonThetaPerturbation}), regarding time delay, resp. threshold deviation.
Fig.~\ref{fig:EvalPostulates} shows an example including the theoretical upper bound proven for the {\it reset-to-mod} variant together with the
norm-errors resulting from our three re-initialization variants and different settings of the leakage parameter. 
For time delays the theoretical upper bound is guaranteed 
for sufficiently small time delays (see~\ref{A:Lag}). 
In this example the other re-initialization variants {\it reset-by-subtraction} and  {\it reset-to-zero} show smaller errors compared to {\it reset-to-mod} .
For threshold deviations it is the other way round and as guaranteed by~(\ref{eq:BoundonThetaPerturbation}) the {\it reset-to-mod} related dashed red line is strictly below the bound (black line) for all $\Delta \vartheta$.
\begin{figure}
\centering
\includegraphics[width=11cm]{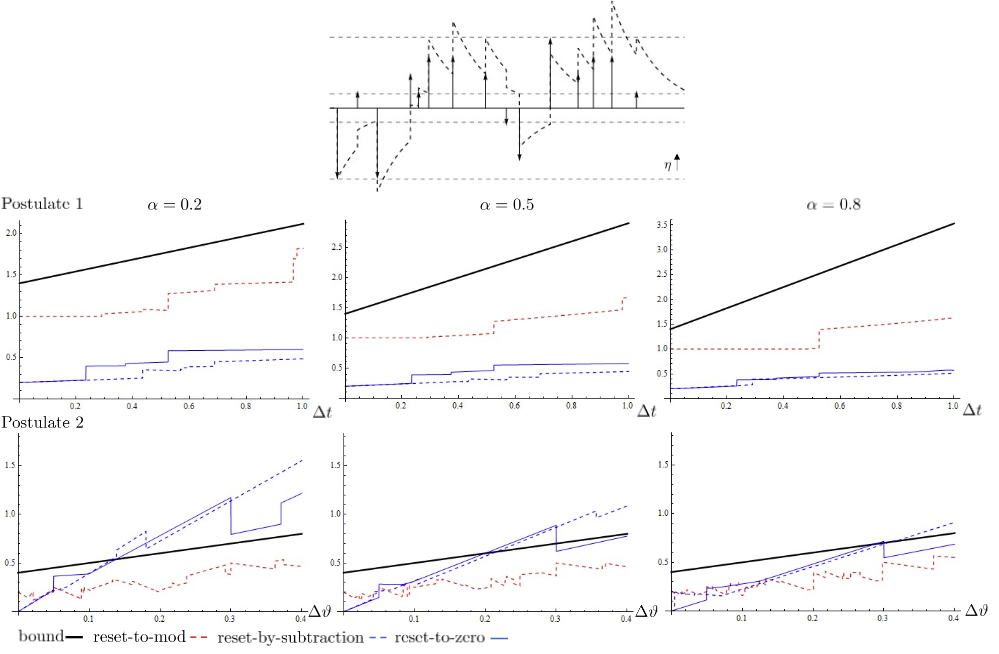}
\caption{Evaluation of the effect of time delay $\Delta t$ (second row) and threshold deviations $\Delta \vartheta$ (third row) for a single LIF neuron for different $\alpha \in \{0.2, 0.5, 0.8\}$.
  The plots show the left side of the inequalities~(\ref{eq:lag1}), resp.~(\ref{eq:BoundonThetaPerturbation}), for the three reset variants together with the bound (black line) given by the right side of the corresponding inequalities. 
	} 
\label{fig:EvalPostulates}
\end{figure}

Like the data for spike train quantization, also the analysis of quasi isometry, see Fig.\ref{fig:QuasiIsometry}, shows significant
differences regarding the choice of the re-initialization mode.
\begin{figure}
\centering
\includegraphics[width=11.5cm]{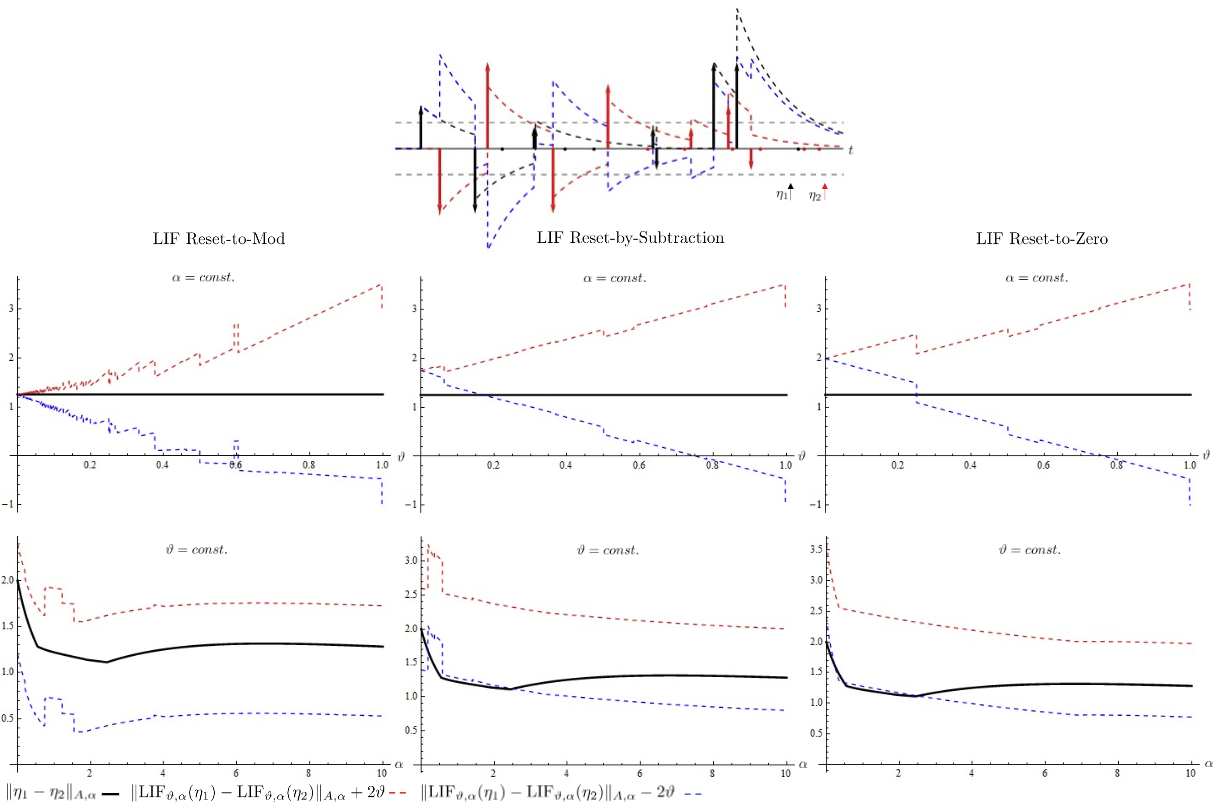}
  \caption{Evaluation of quasi isometry.  
Second and third row: Evaluation of $\|.\|_{A, \alpha}$-norm of   
$\eta_1 - \eta_2$ after applying $\mbox{LIF}_{\vartheta, \alpha}$
for const. $\alpha = 4$, second row, and const. $\vartheta = 0.3$ for third row.
Only 
{\it reset-to-mod} meets the conditions of quasi isometry due to~(\ref{eq:quIsometry}).}
\label{fig:QuasiIsometry}
\end{figure}

\subsection{Lipschitz-Style Upper Bound for LIF and SNNs}
\label{ss:Resonance}
First we look at a single LIF neuron. Theorem~\ref{th:LIFDNInequality} actually addresses two aspects. 
First,  the global Lipschitz-style bound, and second, the observation that the upper bound constant $\gamma(\alpha)$ 
behaves different for $\alpha \in \{0, \infty\}$ and $\alpha \in (0, \infty)$. 
In Fig.~\ref{fig:ResonanceExp1} we compute this effect for different leakage parameters $\alpha$ and scaling factors $\lambda$ of the additive spike train $\nu$.
As these $(\alpha, \lambda)$-plots show the amplification factor can be quite discontinuous and jagged.
Different interfering $\nu$ signals can cause quite different shapes.
The more it is remarkable that the amplification factor is globally bounded for all input spike trains irrespective of how many spikes they contain, and that
for $\alpha = 0$ (or, large)  we have the tight bound of $\gamma(\alpha)=1$. 
For $\alpha \in (0,\infty)$ we find examples converging to $\gamma(\alpha)=2$ as proven in the Theorem. 
It is a conjecture that in fact $\gamma(\alpha)=2$, though in the proof we only have evidence that $\gamma(\alpha) \in [2,3]$.
It remains an open question to analyze this resonance-type phenomenon in more detail. 
In contrast, the different shapes in the 3D plots comparing the re-initialization mode {\it reset-to-mod} (first row of 3D plots) with that of {\it reset-to-zero}
can be explained more easily. Since a large $\alpha$ reduces the dependence on spikes in the past, hence approximating the behavior of {\it reset-to-zero}.
\begin{figure}
\centering
\includegraphics[width=13cm]{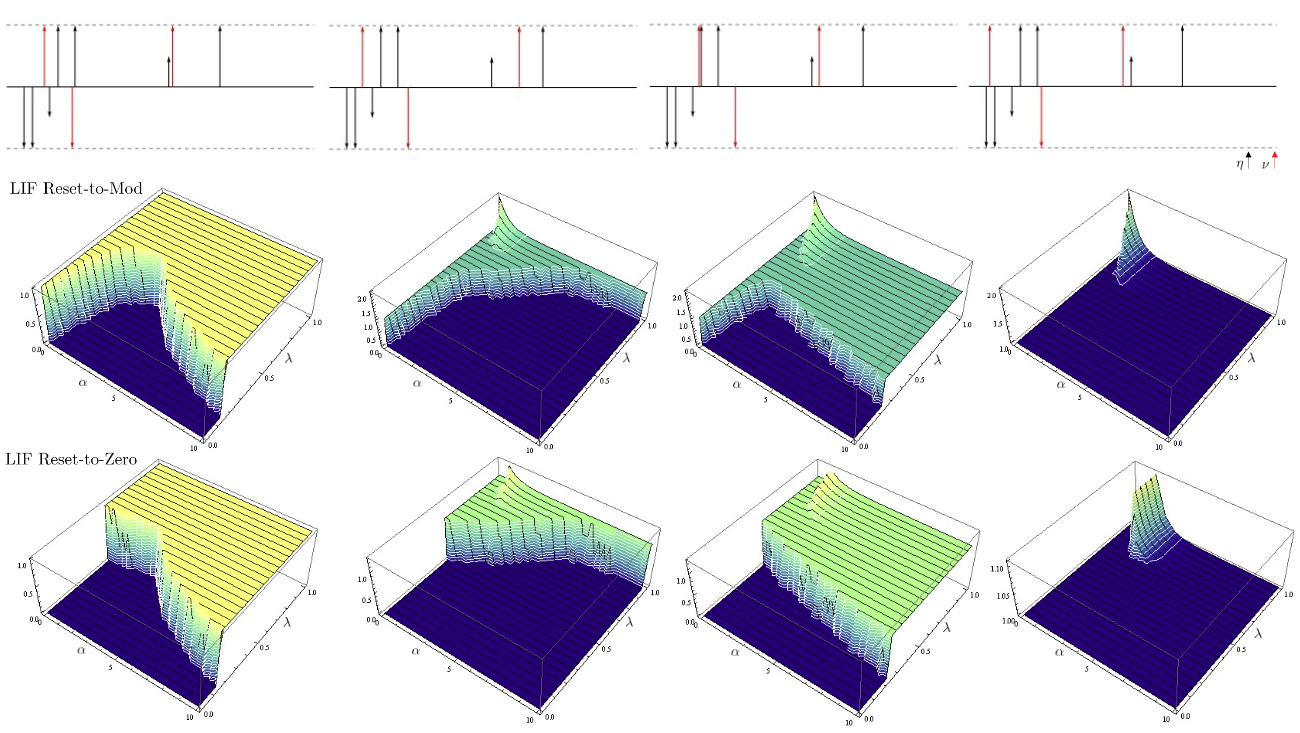}
  \caption{$(\alpha, \lambda)$-plot evaluations of the left hand side of inequality~(\ref{eq:LIFANInequality}) for 
	four variations of Example~\ref{ex:1} for $\alpha \in [0, 10]$ in the x-axis and scaling factor $\lambda \in [0,1]$ in the y-axis. 
			}
		\label{fig:ResonanceExp1}
\end{figure}

($\alpha$, $\lambda$)-plots of Fig.~\ref{fig:ResonanceExp1} look similar for SNNs. 
Due to Theorem~\ref{th:SNNDNInequality} they are globally bounded for all input spike trains.
However, the shape can be quite jagged and discontinuous as illustrated by Fig.~\ref{fig:SNNBoundExample} which shows an example for a $3$-layered SNN with
\begin{equation}
\label{eq:SNNexp}
W^{(1)} = \left( 
\begin{array}{cc}
	1 & 1 \\
	1 & 2
\end{array}
\right), \, 
W^{(2)} = \left( 
\begin{array}{cc}
	0.5 & 0 \\
	0.5 & 0.5 \\
	0   & -0.5
\end{array}
\right), \,
W^{(3)} = \left( 
\begin{array}{ccc}
	1 & 1 & 1
\end{array}
\right).
\end{equation}
The comparison of the two examples in Fig.~\ref{fig:SNNBoundExample} show the sensitivity of time. 
After shifting the disturbing red spike to the green position the resulting $(\alpha, \lambda)$-plot breaks the symmetry 
causing a different characteristic of the shape. A detailed analysis of these effects is postponed to future research.  
\begin{figure}
\centering
\includegraphics[width=13cm]{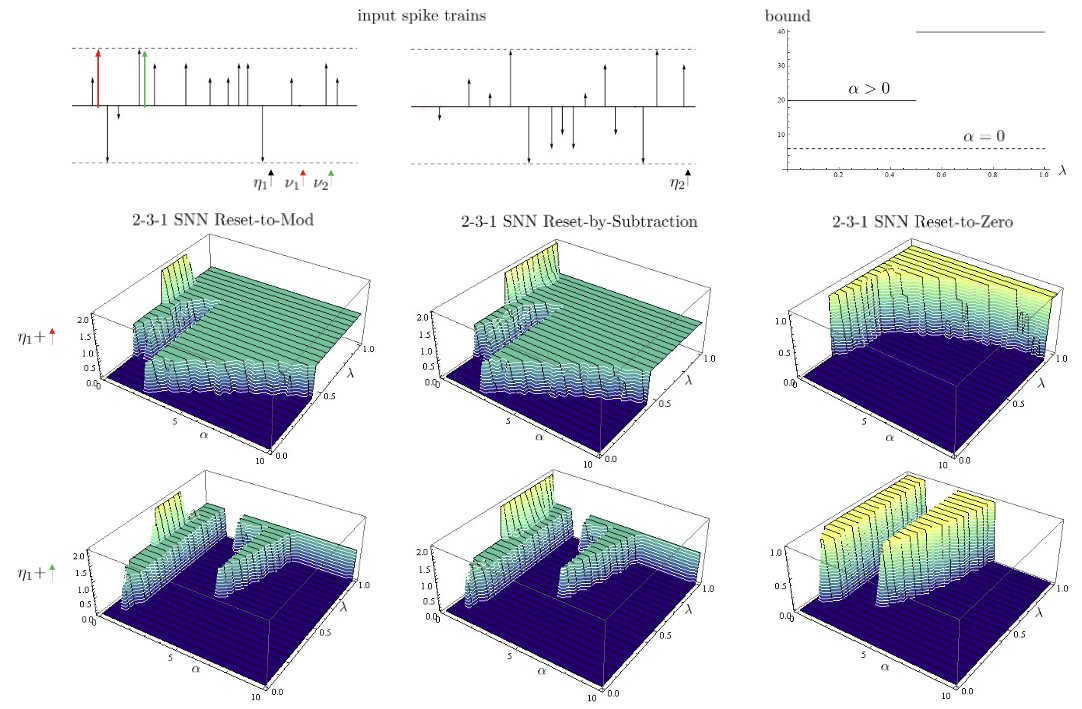}
  \caption{($\alpha$, $\lambda$) evaluation of the left-hand side of the inequality~(\ref{eq:SNNBound}) for 
	the $2$-$3$-$1$ SNN given by the weight matrices~(\ref{eq:SNNexp}) and the input spike trains $\eta_i$. 
		 The outer right graph depicts the right-hand side of the inequality~(\ref{eq:SNNBound}) where the cases $\alpha=0$ and 
	$\alpha >0$ are distinguished.
	The second row shows the ($\alpha$, $\lambda$) evaluation for the red additional spike, 
	while the second row shows the evaluation with the green additional spike.  
	In the the second row we disturb $\eta_1$ by the red spike, and accordingly, in the third row we add the green spike and 
	the resulting 3D-plots of the measured error in the Alexiewicz norm like in Fig.~\ref{fig:ResonanceExp1} for 
the three different re-initialization modes as indicated.
		}
		\label{fig:SNNBoundExample}
\end{figure}

\section{Outlook and Conclusion}
\label{s:Conclusion}
Our approach starts with the well-known observation that bio-inspired signal processing leads to
 a paradigm shift in contrast to the well-established technique of clock-based sampling and processing.
Driven by the hypothesis that this paradigm shift must also manifest itself in its mathematical foundation, 
we started our analysis in terms of a top-down theory development by first searching for informative postulates.
For the LIF neuron model (and SNNs based thereupon) 
our analysis shows that there is an underlying non-Euclidean geometry 
that governs its input-output behavior. As the central result of this paper, it turns out that this mapping can be fully characterized as
signal-to-spike train quantization in the Alexiewicz norm, resp. its adaptation for a positive leakage parameter.
While we gave a proof for this result for spike trains represented by a sequence of weighted Dirac impulses of arbitrary time 
intervals, we indicated in the Appendix that this quantization principle also holds for a wider class. 
Going beyond that, our conjecture is that the quantization error inequality will hold for all signals for which the formula is well-defined, but, 
that for being able to achieve this one will resort to an alternative concept of integration, namely the Henstock-Kurzweil integral which is related to the
Alexiewicz topology. This remains to be worked out in follow-up research. So does the analysis of the resonance-like phenomenon of the  in the context of 
the Lipschitz-style error bound. Another research direction is to explore the potential of our Alexiewicz norm-based approach for information coding and, more generally, for establishing a unified theory that incorporates low-level signal acquisition through event-based sampling and signal processing via feedback loops and learning strategies for high-level problem solving. 
Another thread running through the paper is the question of the choice and impact of the re-initialization variant. 
The quantization theorem is stated for the variant, which we coined {\it reset-to-mod} and results from applying {\it reset-by-subtraction} instantaneously in the case of spike amplitudes that exceed the threshold by a multiple. 
The resulting found properties such as quasi isometry or the error bound on time delay might be arguments for {\it reset-to-mod}, but in the end
this study can only be seen as a starting point towards a more comprehensive theoretical foundation of bio-inspired signal processing taking 
its topological peculiarities into account.
\section*{Acknowledgements}
This work was supported (1) by the 'University SAL Labs'
initiative of Silicon Austria Labs (SAL) and its Austrian partner universities for applied fundamental research for electronic based systems, (2) by Austrian ministries BMK, BMDW, and the State of UpperAustria in the frame of SCCH, part of the COMET Programme managed by FFG, and (3) by the 
{\it NeuroSoC} project funded under the Horizon Europe Grant Agreement number 101070634.

\appendix
\section{Proof of Theorem~\ref{th:closureIF}}
\label{ss:th:closureIF}
We show the proof for $\alpha = 0$. For $\alpha >0$ the argumentation is analogous.

From left to right. 
Consider a sequence $\eta_n$ of sub-threshold spike trains $\eta_n = \sum_{i_n} a_{i_n}^{(n)} \delta_{t_{i_n}}$, 
then an integrate-and-fire neuron never reaches the threshold $\theta>0$, i.e., for all $n$ and $m$ we have 
$|\sum_{i_n=0}^m a_{i_n}^{(n)}| < \theta$. Consequently, taking the limit w.r.t. $n$ we obtain the right-hand side of 
Equ.~(\ref{eq:closure}).

From right to left. Now, consider $\eta = \sum_i a_i\delta_{t_{i}}$ satisfying the inequality of the right-hand side of  
Equ.~(\ref{eq:closure}). If there is no spike in the output, the spike train is sub-threshold, i.e., $\eta \in C$, hence $\eta \in \overline{C}$. Assume that there is at least one spike in the output. Without loss of generality, let us assume that the first spike is positive.
Then we define $i_0 := 0$ and $i_k$ recursively by $i_{k+1} := \min\{j: \sum_{i = i_k+1}^{j} = (-1)^k 2 \vartheta\}.$
Note that $a_i^{(\varepsilon)} := a_i - (-1)^k \varepsilon a_i/ \vartheta\}$ 
yields a spike train $\eta^{(\varepsilon)} \in C$ that converges to $\eta$. 

\section{Unit Ball of $\|.\|_{A,0}$}
\label{A:normA}
In this section we characterize the unit ball $B_A$ of $\|.\|_A$ as sheared transform of the hypercube $[-1,1]^N$, i.e.,
$B_A = \{x \in \mathbb{R}^N:\, \|x\|_A \leq 1\} = \{x = Ty :\, y \in [-1,1]^N\}$, where
\begin{equation}
\label{eq:T}
T = \left(
\begin{array}[4]{ccccc}
	1   & 0 & \cdots & \cdots &0 \\
	-1  & 1 & 0 & \cdots & 0 \\
	\vdots & \cdots & \ddots & \ddots & \vdots \\
	0 & \cdots & \cdots & -1 & 1
\end{array}
\right).
\end{equation}

\begin{proof}
We are interested to characterize all $x=(x_1, \ldots, x_N) \in \mathbb{R}^N$ such that $\|x\|_A\leq 1$, i.e.,
$y_n:= \sum_{i=1}^n x_i \in [-1,1]$ for all $n \in \{1, \ldots, N\}$.
Expressing $x_i$ in terms of $y_i$ means $y_1 = x_1$, $y_2 - y_1  =  x_2$, $\ldots$, $y_N - y_{N-1} = x_N$, that means $x = Ty$ due to (\ref{eq:T}).
\end{proof}

\section{Proof for Spike Train Quantization for Dirac Impulses}
\label{A:Dirac:th:quantization}
We recall the proof from~\cite{MoserLunglmayrESANN2023}
\begin{theorem}[{\it \bf reset-to-mod} LIF Neuron as $\|.\|_{A,\alpha}$-Quantization]
\label{th:quantizationDirac}
Given a LIF neuron model with  {\it reset-to-mod}, the LIF parameters $\vartheta>0$ and $\alpha \in [0,\infty]$ and the spike train 
$\eta \in \mathbb{S}$ with amplitudes $a_i \in \mathbb{R}$. 
Then, $\mbox{LIF}_{\vartheta, \alpha}(\eta)$ is a 
$\vartheta$-quantization of $\eta$, i.e., the resulting spike amplitudes 
are multiples of $\vartheta$, where the quantization error is bounded by (\ref{eq:quantization}),
hence 
$\mbox{LIF}_{\vartheta, \alpha}(\mbox{LIF}_{\vartheta, \alpha}(\eta) -\eta) = \emptyset$ and 
$\mbox{LIF}_{\vartheta, \alpha}\left(\mbox{LIF}_{\vartheta, \alpha}(\eta)\right) = \mbox{LIF}_{\vartheta, \alpha}(\eta)$.
\end{theorem}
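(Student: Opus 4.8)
The plan is to follow the membrane potential of the neuron only at the input event times $t_n$ and to compare, by induction, the $\alpha$-discounted partial sums of the input spike train with those of the output. Write $\eta = \sum_i a_i\delta_{t_i}$ and $\mbox{LIF}_{\vartheta,\alpha}(\eta) = \sum_i b_i\delta_{t_i}$. In the idealized model underlying (\ref{eq:LIF})--(\ref{eq:u}), between two consecutive events the potential merely decays exponentially, so its modulus cannot increase on an inter-event interval; hence every firing is triggered at some $t_n$, and the dynamics collapse to the scalar recursion $u_n = e^{-\alpha(t_n - t_{n-1})}\, r_{n-1} + a_n$, with $r_0 = 0$ and $r_{n-1}$ the residual potential left after the reset at $t_{n-1}$. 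For {\it reset-to-mod}, (\ref{eq:reset}) prescribes $b_n = [u_n/\vartheta]\,\vartheta$ and $r_n = u_n - b_n$, and the definition (\ref{eq:[]}) of $[\cdot]$ forces $|r_n| < \vartheta$. In particular each $b_n$ is an integer multiple of $\vartheta$, which is the quantization claim.

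Next I would establish, by induction on $n$, the identity
$$
\sum_{i=1}^{n} a_i\, e^{-\alpha(t_n - t_i)} \;-\; \sum_{i=1}^{n} b_i\, e^{-\alpha(t_n - t_i)} \;=\; r_n .
$$
The base case $n=0$ is trivial, and in the step both discounted sums satisfy the same update $X_n = e^{-\alpha(t_n - t_{n-1})}X_{n-1} + (\mbox{new term})$, so the difference propagates as $e^{-\alpha(t_n - t_{n-1})} r_{n-1} + a_n - b_n = u_n - b_n = r_n$. Taking absolute values, maximizing over $n$, and invoking the definition (\ref{eq:A}) of $\|\cdot\|_{A,\alpha}$ gives $\|\mbox{LIF}_{\vartheta,\alpha}(\eta) - \eta\|_{A,\alpha} = \max_n |r_n| < \vartheta$, i.e. (\ref{eq:quantization}).

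For the two stated consequences I would argue as follows. Put $\zeta := \mbox{LIF}_{\vartheta,\alpha}(\eta) - \eta$; then $\|\zeta\|_{A,\alpha} < \vartheta$, so all discounted partial sums of $\zeta$ are strictly below the threshold in modulus. Running the same event-time recursion on input $\zeta$ with no resets, the potential at $t_n$ equals exactly the corresponding discounted partial sum of $\zeta$, which stays below $\vartheta$, so no firing ever occurs; hence $\mbox{LIF}_{\vartheta,\alpha}(\zeta) = \emptyset$ (consistent with the characterization in Lemma~\ref{th:closureIF}). For idempotency, feed $\psi := \mbox{LIF}_{\vartheta,\alpha}(\eta)$ back in: since every amplitude of $\psi$ is an exact integer multiple of $\vartheta$, starting from residual $0$ the update yields $u_n = b_n = k_n\vartheta$, so $[u_n/\vartheta]\vartheta = u_n$, the emitted spike is again $b_n$, and the new residual is once more $0$; by induction the output equals $\psi$, i.e. $\mbox{LIF}_{\vartheta,\alpha}(\mbox{LIF}_{\vartheta,\alpha}(\eta)) = \mbox{LIF}_{\vartheta,\alpha}(\eta)$.

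The step I expect to be the main obstacle is not the induction but the reduction to the event-time recursion itself: one must argue rigorously from the continuous-time definitions (\ref{eq:LIF})--(\ref{eq:u}) with the instantaneous-discharge idealization that (a) no threshold crossing occurs strictly between consecutive events, and (b) the cascaded {\it reset-by-subtraction} reading of {\it reset-to-mod} really produces a single output spike of amplitude $[u_n/\vartheta]\vartheta$ and the single residual $u_n - [u_n/\vartheta]\vartheta$. The boundary regimes $\alpha = 0$ (no discounting, $e^{-\alpha(t_n-t_i)} \equiv 1$, so $\|\cdot\|_{A,0} = \max_n|\sum_{i\le n} a_i|$) and $\alpha = \infty$ ($e^{-\alpha(t_n-t_i)} = 0$ for $i<n$, so $\|\cdot\|_{A,\infty} = \|\cdot\|_\infty$) then need only a one-line remark, since the recursion and the induction go through verbatim with the conventions $e^{-0}=1$ and $e^{-\infty}=0$.
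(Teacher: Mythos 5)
Your proposal is correct and follows essentially the same route as the paper's proof: the identity $\sum_{i\le n}(a_i-b_i)e^{-\alpha(t_n-t_i)}=r_n$ proved by induction is exactly the paper's claim $\hat{s}_{i_k}=\Delta_{i_k}$, with the residual $r_n$ playing the role of the post-reset potential $\Delta_{i_k}$ and the bound $|r_n|<\vartheta$ coming from integer truncation at firing times and from the sub-threshold condition at non-firing times. The only (cosmetic) difference is that you run a single unified recursion over all input event times, whereas the paper separates output-spike indices from intermediate ones and uses an $\oplus$ shorthand for the discounted sums.
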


\begin{proof}
\begin{wrapfigure}{r}{7cm}
  \includegraphics[width=6.5cm]{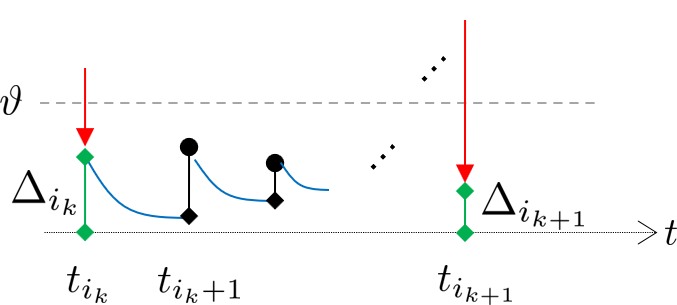}
	\caption{Illustration of Equation~(\ref{eq:quantDeltaRecursion}). The red arrows indicate {\it reset-by-mod}.}
 		\label{fig:quantProof}
\end{wrapfigure}

First of all, we introduce the following operation $\oplus$, which is associative and can be handled with like the usual addition if adjacent elements $a_i$ from a spike train $\eta = \sum_i a_i \delta_{t_i}$ are aggregated:
\begin{equation}
\label{eq:pseudoaddition}
a_i \oplus a_{i+1} :=  e^{-\alpha (t_{i+1}- t_{i})} a_i  +  a_{i+1}.
\end{equation}
This way we get a simpler notation when aggregating convolutions, e.g., 
\[
a_i \oplus \ldots \oplus a_j = \sum_{k=i}^j e^{-\alpha (t_{j}- t_{k})} a_k.
\] 

For the discrete version we re-define 
$a_{i_k} \oplus a_{i_{k+1}} :=  \beta^{(i_{k+1}- i_{k})} a_{i_k}  +  a_{i_{k+1}}$, if $i_{k}$ and $i_{k+1}$ refer to adjacent spikes at time $i_k$, resp. $i_{k+1}$. Further, we denote $q[z]:= \mbox{sgn}(z)\, \left\lfloor |z|\right\rfloor$ which is the ordinary quantization due to integer truncation, e.g. $q[1.8] = 1$, $q[-1.8]=-1$, where $\left\lfloor |z| \right\rfloor = \max\{n\in \mathbb{N}_0:\, n \leq |z|\}$.  

After fixing notation let us consider a spike train $\eta = \sum_j a_i \delta_{t_{i}}$. 
Without loss of generality we may assume that $\vartheta = 1$.
We have to show that $\|\mbox{LIF}_{1, \alpha}(\eta)  - \eta\|_{A, \alpha} < 1$, which is equivalent 
to the discrete condition that $\forall n: \max_n \left|\sum_{i=1}^n \hat{a}_i \right| < 1$,
where  $\eta - \mbox{LIF}_{1, \alpha}(\eta) = \sum_i \hat{a}_i \delta_{t_i}$.
Set $\hat{s}_k := \hat{a}_0 \oplus \cdots \oplus \hat{a}_k$. We have to show that
$\max_{k}|\hat{s}_k| < 1$. The proof is based on induction and leads the problem back to the standard quantization by truncation.

Suppose that at time $t_{i_{k-1}}$ after re-initialization by {\it reset-to-mod} we get the residuum $\Delta_{i_{k-1}}$ as membrane potential that is the starting point for the integration after $t_{i_{k-1}}$. 
Note that 
\begin{equation}
\mbox{LIF}_{1, \alpha}(\eta)|_{t = t_k} = q(\Delta_{i_{k-1}} \oplus a_{i_{k-1}+1} \cdots \oplus a_{i_{k}}) \nonumber
\end{equation}
Then, as illustrated in Fig.~\ref{fig:quantProof} the residuum $\Delta_{i_{k}}$ at the next triggering event $t_{i_{k}}$ is obtained by the equation
\begin{equation}
\label{eq:quantDeltaRecursion}
\Delta_{i_{k}} = \Delta_{i_{k-1}} \oplus a_{i_{k-1}+1} \oplus \ldots \oplus  a_{i_k} - q[\Delta_{i_{k-1}} \oplus \ldots \oplus  a_{i_k}].
\end{equation}
Note that due to the thresholding condition of LIF we have
\begin{equation}
\label{eq:thcond}
|\Delta_{i_{k}} \oplus a_{i_{k}+1} \oplus \ldots \oplus  a_j| < 1
\end{equation}
for $j \in \{i_{k}+1, \ldots, i_{k+1}-1\}$.
For the $\oplus$-sums $\hat{s}_{i_k}$ we have
\begin{equation}
\label{eq:ahat}
\hat{s}_{i_{k+1}} = \hat{s}_{i_{k}} \oplus a_{i_{k}+1} \cdots a_{i_{k+1}-1} \oplus 
\left( 
a_{i_{k+1}} - q[\Delta_{i_{k}} \oplus a_{i_{k}+1} \oplus \ldots \oplus a_{i_{k+1}}]
\right).
\end{equation}

Note that $\hat{s}_0 = \Delta_{i_0}= a_0 - q[a_0]$, then for induction we assume that up to index $k$ to have
\begin{equation}
\label{eq:quantInduction}
\hat{s}_{i_k} = \Delta_{i_k}.
\end{equation}

Now, using (\ref{eq:quantInduction}), Equation~(\ref{eq:ahat}) gives
\begin{eqnarray}
\label{eq:induction}
\hat{s}_{i_{k+1}} & = & \Delta_{i_k} \oplus a_{i_{k}+1} \oplus \ldots \oplus 
a_{i_{k+1}-1} \oplus 
\left( 
a_{i_{k+1}} - q[\Delta_{i_{k}} \oplus a_{i_{k+1}} \oplus \ldots \oplus a_{i_{k+1}}]
\right) \nonumber \\
& = & \Delta_{i_k} \oplus a_{i_{k}+1} \oplus \ldots \oplus 
a_{i_{k+1}-1} \oplus 
a_{i_{k+1}} - q[\Delta_{i_{k}} \oplus a_{i_{k+1}} \oplus \ldots \oplus a_{i_{k+1}}], \nonumber\\
 & = & \Delta_{i_{k+1}}
\end{eqnarray}
proving (\ref{eq:quantInduction}), which together with (\ref{eq:thcond}) ends the proof showing that 
$|\hat{s}_k|<1$ for all $k$.

\end{proof}

\section{Example for Spike Train Decomposition}
\label{A:Decomposition}
Algorithm~\ref{alg:Decomposition} summarizes this approach in the proof of Theorem~\ref{cor:ADecomposition}
and Fig.~\ref{fig:Decomposition} 
demonstrates an example.
\begin{algorithm}
\caption{Spike Train Decomposition}
\label{alg:Decomposition}
\noindent
{\bf Step 0}:  Initialization: $\eta_0 := \eta$, $r=0$;
\\
\noindent
{\bf Step 1}: Up and Down Intervals: Partition the time domain into up and down intervals $\overline{J}_k$, resp.  $\underline{J}_k$ due to Equ.~(\ref{eq:Jk}) based on the 
top and bottom peaks positions $\overline{m}_k$, resp., $\underline{m}_k$ in the resulting walk according to Equ. (\ref{eq:topPeak}), resp. (\ref{eq:bottomPeak}).
\noindent\\
{\bf Step 2}:  Unit Discrepancy Delta: Define $\Delta \eta_{r+1}$ according to (\ref{eq:a1}), (\ref{eq:d1}), (\ref{eq:d2}), (\ref{eq:d11}) and (\ref{eq:d21}).
\\
{\bf Step 3:}  Subtraction:	$\eta_{r+1}:= \eta_r - \Delta\eta_{r+1}$;
\\
{\bf Step 4:}  Repeat steps {\bf 1}, {\bf 2} and {\bf 3} until $r=\|\eta\|_{A,0}$ to obtain $\eta = \sum_{k=1}^r \Delta\eta_k$.
\end{algorithm}

\begin{figure}
\centering
\includegraphics[width=13cm]{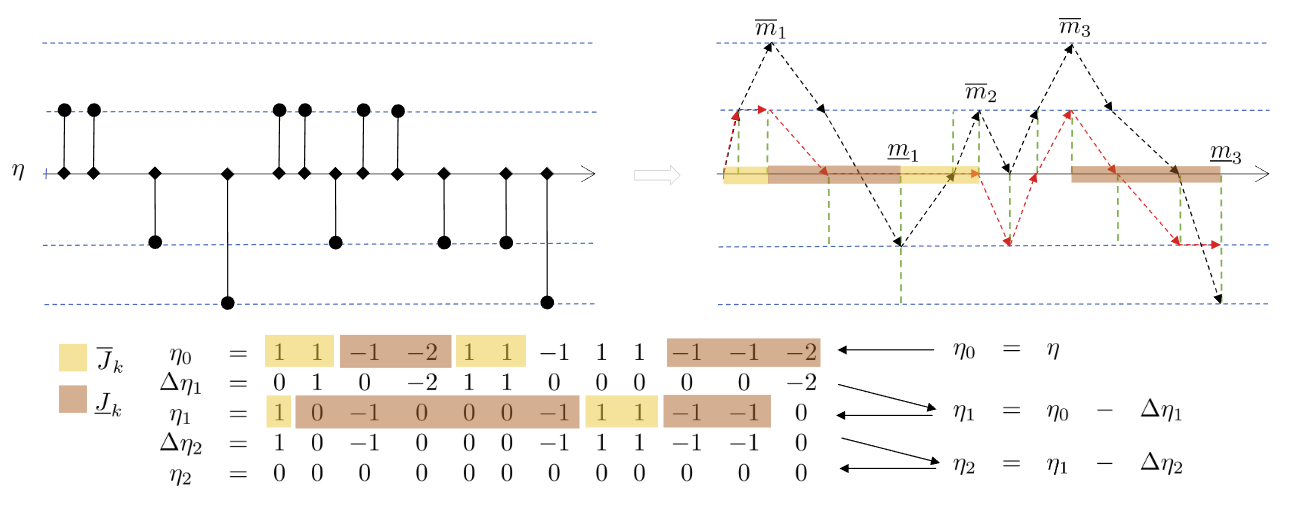}
  \caption{Example of decomposing a spike train $\eta$ with $\|\eta\|_{A,0}=2$ into a sum 
	$\eta = \sum_{k=1,2} \Delta \eta_k$ with $\|\Delta \eta_k\|_{A,0} = 1$, according to Algorithm~\ref{alg:Decomposition}.
	Top left: spike train $\eta$; Top right: illustration of first step with resulting walk and its top and bottom peaks due to (\ref{eq:topPeak}) and (\ref{eq:bottomPeak}).
	The red dashed line marks the resulting walk after subtracting $\Delta \eta_1$.
	Bottom: Recursive steps with highlighted up and down intervals according to (\ref{eq:Jk}).
	}
		\label{fig:Decomposition}
\end{figure}

\section{Error Bound on Lag, Corollary~\ref{cor:BoundonLag}}
\label{A:Lag}
The quasi-isometry property (\ref{eq:quIsometry}) yields
\begin{eqnarray}
\label{eq:lagproof1}
\left\| 
\mbox{LIF}_{\vartheta, \alpha}(\eta(\cdot - \Delta t)) -
\mbox{LIF}_{\vartheta, \alpha}(\eta)   
\right\|_{A, \alpha} & \leq &  
\left\|\eta(\cdot - \Delta t) -\eta \right\|_{A, \alpha} + 2\vartheta.
\end{eqnarray}
Because of $e^{\alpha \Delta t} = 1 + \alpha \Delta t + O(\Delta t^2)$ for $\Delta t \approx 0$, we get
\begin{eqnarray}
\label{eq:lagproof2}
\left\|\eta(\cdot - \Delta t) -\eta \right\|_{A, \alpha} & = & 
\max_{i} 
\left| 
\sum_{j=1}^{i-1} 
\left( 
a_j e^{-\alpha (t_i - t_j)} - a_j e^{-\alpha (t_i - t_j- \Delta t)} 
\right) + a_i
\right| \nonumber \\
 & = & 
\max_{n} 
\left| 
\sum_{j=1}^{n-1} 
a_j e^{-\alpha (t_n - t_j)} \left( 1- e^{\alpha \Delta t)} \right) + a_n
\right| \nonumber \\
& = & 
\max_{i} 
\left|
(-\alpha \Delta t)\sum_{j=1}^{i-1} a_j e^{-\alpha (t_i - t_j)}  + a_n 
\right| + O(\|\eta\|_{A, \alpha} \Delta t) \nonumber \\
& \leq &
\alpha \, (\|\eta\|_{A, \alpha} +\max_i |a_i|) \, \Delta t + \max_i |a_i|,
\end{eqnarray}
which together with (\ref{eq:lagproof1}) proves (\ref{eq:lag1}). 

\section{Proof of Lemma~\ref{lem:IFnu}}
\label{A:lem:IFnu}
First, let us check the case $\alpha =0$. Without loss of generality we may assume that $\vartheta = 1$.
Given spike trains $\eta, \nu \in \mathbb{S}$, $\eta = \sum_i a_i \delta_{t_i}$ and 
$\nu = \sum_i b_i \delta_{t_i}$ and suppose that $\|\nu\|_{A,0}\leq 1$.
Denote by $p_{\eta}(t_i^-)$ the membrane's potential in the moment before triggering a spike w.r.t the input spike train $\eta$.

Indirectly, let us assume that there are three subsequent spikes with the same polarity (all negative or all positive) generated by adding $\nu$ to $\eta$.
Without loss of generality we may assume that the polarity of these three spike events is negative.
Let denote these time points by $t_{s_1} <  t_{s_2} < t_{s_3}$ and let us consider the time point $t_{s_0} \leq t_{s_1}$ at which for the first time $\nu$ contributes to the negative spike event at $t_{s_1}$. Then, the first spiking event after $t_{s_0}$ is realized at $t_{s_1}$, which is characterized by
\begin{equation}
\label{eq:spikes1}
p_{\eta}(s_1)- \sum_{i = s_0}^{s_1} b_i < [p_{\eta}(s_1)] - 1.
\end{equation}
After the spike event at $t_{s_1}$ the re-initialization due to {\it reset-to-mod} takes place, meaning the reset of the membrane's potential increase by 
$1$, resulting in the addition of $(1 - \sum_{i = s_0}^{s_1} b_i)$ to the original membrane's potential $p_{\eta}(s_1)$.
Hence, we obtain for the second subsequent spike at $t_{s_2}$ the firing condition
\begin{equation}
\label{eq:spikes2}
p_{\eta}(s_2) + (1 - \sum_{i = s_0}^{s_1} b_i) - \sum_{i > s_1}^{s_2} b_i < [p_{\eta}(s_2)] - 1,
\end{equation}
thus,
\begin{equation}
\label{eq:spikes21}
p_{\eta}(s_2) -[p_{\eta}(s_2)]  +2  < \sum_{i = s_0}^{s_2} b_i.
\end{equation}
Analogously, we obtain as characterizing condition for the third spiking event
\begin{equation}
\label{eq:spikes21}
p_{\eta}(s_3) -[p_{\eta}(s_3)]  + 3  < \sum_{i = s_0}^{s_3} b_i.
\end{equation}
Since $\|\nu\|_{A, 0} \leq 1$, i.e., $|\sum_{i=1}^k b_i| \leq 1 $  for all $k$ it follows that 
$|\sum_{i=k_1}^{k_2} b_i| = |\sum_{i=1}^{k_2} b_i  - \sum_{i=1}^{k_1-1} b_i| \leq 2$.
Which applied to (\ref{eq:spikes21}) yields the contradiction
\begin{equation}
\label{eq:spikes3contra}
p_{\eta}(s_3) -[p_{\eta}(s_3)]  +3  < \sum_{i = s_0}^{s_3} b_i \leq 2, 
\end{equation}
namely, $-1 < p_{\eta}(s_3) -[p_{\eta}(s_3)]  \leq -1$. Therefore, there are at most $2$ subsequently triggered additional spikes by adding $\nu$.

The same way of reasoning, now with Equ.~(\ref{eq:spikes2}), shows that the first occurrence of an added spike can only be a single event which if any has to be followed by a spike event with different polarity. All together this means that 
$\|\mbox{LIF}_{1,0}(\eta+\nu)\|_{A,0} - \mbox{LIF}_{1,0}(\eta)\|_{A,0} \leq 1$. 
\\
\noindent
The second case of $\alpha=\infty$, i.e., $\|\nu\|_{A, \infty} = \max_i |b_i| \leq 1$ reduces to the standard quantization of integer truncation, i.e., to show
that $\max_i |[a_i +  b_i] - [a_i]|\leq 1$, which follows from
the fact that $-1 \leq a_i - [a_i] + b_i < 2$ for positive $a_i$ and $-2 < a_i - [a_i] + b_i \leq 1$ for negative $a_i$.

\bibliographystyle{unsrtnat}
\bibliography{references}  

\end{document}